\newtheorem{theorem}{Theorem}[section]
\newtheorem{lemma}[theorem]{Lemma}
\newtheorem{proposition}[theorem]{Proposition}
\newtheorem{corollary}[theorem]{Corollary}
\theoremstyle{definition}
\newtheorem{definition}[theorem]{Definition}
\theoremstyle{remark}
\newtheorem{remark}[theorem]{Remark}
\crefname{theorem}{Theorem}{Theorems}
\Crefname{theorem}{Theorem}{Theorems}
\crefname{lemma}{Lemma}{Lemmas}
\Crefname{lemma}{Lemma}{Lemmas}
\crefname{proposition}{Proposition}{Propositions}
\Crefname{proposition}{Proposition}{Propositions}
\crefname{corollary}{Corollary}{Corollaries}
\Crefname{corollary}{Corollary}{Corollaries}
\crefname{definition}{Definition}{Definitions}
\Crefname{definition}{Definition}{Definitions}
\crefname{remark}{Remark}{Remarks}
\Crefname{remark}{Remark}{Remarks}
\crefname{section}{Section}{Sections}
\Crefname{section}{Section}{Sections}
\crefname{subsection}{Section}{Sections}
\Crefname{subsection}{Section}{Sections}
\crefname{algorithm}{Algorithm}{Algorithms}
\Crefname{algorithm}{Algorithm}{Algorithms}
\newcommand{\eqdef}{\ensuremath{\stackrel{\mbox{\upshape\tiny def.}}{=}}}
\newcommand{\ra}[1]{\renewcommand{\arraystretch}{#1}}
\DeclareMathOperator*{\argmax}{arg\,max}
\DeclareMathOperator*{\argmin}{arg\,min}
\renewcommand{\phi}{\varphi}
\newcommand{\eps}{\varepsilon}
\newcommand{\rr}{\mathbb{R}}
\newcommand*\diff{\mathop{}\!\mathrm{d}}
\let\emptyset\varnothing
\newcommand{\N}{\mathbb{N}}
\def\p{\partial}
\def\e{\varepsilon}
\let\And\and
\title{Mixtures of Neural Operators Reduce Active Complexity in Operator Learning}
\author{
Anastasis Kratsios\\
McMaster University and Vector Institute\\
\texttt{kratsioa@mcmaster.ca}
\And
Takashi Furuya\\
Shimane University\\
\texttt{takashi.furuya0101@gmail.com}
\And
Antonio Lara\\
Rice University\\
\texttt{antonio.lara@rice.edu}
\And
Matti Lassas\\
University of Helsinki\\
\texttt{Matti.Lassas@helsinki.fi}
\And
Maarten de Hoop\\
Rice University\\
\texttt{mdehoop@rice.edu}
}
\date{}
\begin{document}

\maketitle

\begin{abstract}
Operator-learning systems are not governed solely by total parameter count; for one query, the relevant bottleneck can be the model that must be loaded and evaluated. We study this distinction for classical neural operators on compact Sobolev subsets through a constructive comparison between routed mixtures of neural operators (MoNOs) and a fixed single-neural-operator construction. The comparison concerns expert-active complexity relative to that baseline, with total stored size and routing search accounted separately. A MoNO routes each input function through a tree to one expert. Our main theorem shows that every scalar uniformly continuous nonlinear operator with bounded output Sobolev radius on the approximation set admits a MoNO approximation whose active expert has smaller depth, width, and rank scaling than the analyzed single-neural-operator construction; for Lipschitz targets these expert quantities are bounded by $\mathcal{O}(\varepsilon^{-1})$. The theorem turns localization into an operator-level accounting of active expert size, routing depth, and number of experts. We also prove a quantitative universal approximation theorem for the underlying neural-operator architecture, with explicit dependence on compact-set diameter and modulus of continuity.
\end{abstract}

\section{Introduction}
\label{s:Introduction}

Neural operators (NOs) are a central model class in scientific machine learning because they learn maps between function spaces rather than maps between fixed finite-dimensional vectors. They have produced strong empirical results in physics, finance, control, game theory, and inverse problems; see, for example, \cite{Chen2_OG_NeuralOperators_IEEE_1995,lu2021learning,li2021fourier,kovachki2021neural,korolev2022two,herrmann2022neural,adcock2022near,raonic2024convolutional,de2022deep,pmlr-v202-molinaro23a,andrade2023poissonnet}. Variants also encode structure such as representation equivariance, invertibility, and causality \cite{bartolucci2024representation,furuya2023globally,galimberti2022designing,mccabe2023towards,acciaio2023designing}. From a constructive approximation-theoretic viewpoint, however, total parameter count is not the only relevant resource: one must also ask how much of the model is active on a single query. For the finite-rank single-NO construction analyzed below, the displayed depth bound can become very large on compact subsets of function space.

The main theorem is best viewed as a constructive comparison theorem. Relative to the single-NO bound of \Cref{prop:quantitative_UAT__ON}, routing improves the size of the expert active on one query. The accounting separates expert-active size from total stored size and routing resources, and the comparison is made against this explicit constructive baseline.

This paper asks whether routing can reduce that \emph{active} burden without shrinking the target class. In finite dimensions, classical approximation theory predicts a parameter count polynomial in $\eps^{-1}$, with exponent tied to ambient dimension \cite{yarotsky2018optimal,petersen2018optimal,herrmann2022constructive,zhang2023deep}. In the operator setting, the constructive worst-case bounds used here are much more severe: the depth of the analyzed single-NO construction can grow exponentially in the effective accuracy scale. We use this construction as a fixed reference point for the routed comparison; shallow universal operator approximators and FNO universality are well known \cite{Chen2_OG_NeuralOperators_IEEE_1995,lu2021learning,KovachkiLanthalerMishra_UniFNO_JMLR_2021}.

Several alternatives weaken this barrier by changing the approximation class or the target regime: one can restrict to highly regular targets \cite{marcati2023exponential}, impose low-complexity structure \cite{adcock2022near,herrmann2022neural,PCANetErrorBounds_JMLR_2023}, use kernel or random-feature operator learners \cite{batlle2024kernel,nelsen2024operator}, exploit problem-specific numerical structure \cite{lanthaler2023curse,lanthaler2025parametric}, hardwire transport and relaxation physics \cite{larabenitez2025neurde}, or enlarge the activation class \cite{yarotsky2021elementary,yarotsky2020phase,shen2021deep,jiao2023deep}. We instead stay in the classical activation regime and obtain the benefit from localization and routing. The point is not merely that a compact set admits a cover; it is that routing converts such localization into an explicit operator-level trade-off among local diameter, active expert size, routing depth, and number of experts.

\paragraph{Structured comparison class.}
The paper is complementary to regimes where a single global model already has algebraic total complexity, including Barron-type regularity, holomorphic parametric-PDE maps, PCA- or representation-adapted surrogates, kernel and random-feature methods, and other low-complexity settings \cite{barron1993universal,adcock2022near,herrmann2022neural,marcati2023exponential,PCANetErrorBounds_JMLR_2023,batlle2024kernel,nelsen2024operator}. Those results address structure-exploiting total-complexity guarantees, while our focus is expert-active complexity in a worst-case constructive setting where no low-dimensional or analytic structure is assumed.

Our main object is a \emph{mixture of neural operators} (MoNO): a rooted decision tree routes an input function to one leaf, and that leaf stores a specialized neural operator. Instead of approximating the target on all of $K$ with one global NO, we approximate it on smaller local regions. The total parameter count can still be large, but only one expert is active for a given input. Thus the approximation burden shifts from one monolithic operator to the number of experts and the routing structure. This is the operator-learning analogue of sparse activation in modern mixture-of-experts systems such as Switch Transformers and Mixtral \cite{fedus2022switch,jiang2024mixtral,shazeer2017outrageously,lepikhin2021gshard,barham2022pathways,zhou2022mixture}. The theorem below makes this shift quantitative.

\begin{figure*}[t]
\centering
\definecolor{nmi_blue}{RGB}{0, 114, 178}       
\definecolor{nmi_accent}{RGB}{213, 94, 0}      
\definecolor{nmi_bg}{RGB}{248, 249, 250}       
\definecolor{nmi_border}{RGB}{206, 212, 218}   
\definecolor{nmi_text}{RGB}{51, 51, 51}        
\definecolor{nmi_gray}{RGB}{173, 181, 189}     

\begin{adjustbox}{max width=\textwidth,center}
\begin{tikzpicture}[
  >=Latex,
  panel_half/.style={
    draw=nmi_border,
    line width=1pt,
    rounded corners=8pt,
    fill=nmi_bg,
    minimum width=7.2cm,
    minimum height=4.8cm
  },
  panel_full/.style={
    draw=nmi_border,
    line width=1pt,
    rounded corners=8pt,
    fill=nmi_bg,
    minimum width=15.0cm,
    minimum height=3.8cm
  },
  panellabel/.style={
    font=\Large\bfseries\sffamily,
    text=black,
    anchor=north west
  },
  stage/.style={
    draw=nmi_blue!80!black,
    line width=1pt,
    rounded corners=4pt,
    fill=nmi_blue!10,
    text=nmi_text,
    minimum width=1.1cm,
    minimum height=0.7cm,
    align=center,
    font=\small\sffamily
  },
  stage_active/.style={
    draw=nmi_accent,
    line width=1.2pt,
    rounded corners=4pt,
    fill=nmi_accent!10,
    text=nmi_text,
    minimum width=1.1cm,
    minimum height=0.7cm,
    align=center,
    font=\small\sffamily
  },
  stage_inactive/.style={
    draw=nmi_gray,
    line width=1pt,
    rounded corners=4pt,
    fill=nmi_gray!15,
    text=nmi_gray!80!black,
    minimum width=1.1cm,
    minimum height=0.7cm,
    align=center,
    font=\small\sffamily
  },
  title/.style={
    font=\large\bfseries\sffamily, 
    align=center,
    text=nmi_text
  },
  note/.style={
    align=center, 
    text width=5.6cm, 
    font=\small\sffamily,
    text=nmi_text!90!black
  },
  note_left/.style={
    align=left, 
    text width=4.8cm, 
    font=\small\sffamily,
    text=nmi_text!90!black
  },
  pill/.style={
    draw=nmi_blue!60,
    line width=0.8pt,
    rounded corners=12pt,
    fill=white,
    text=nmi_blue!90!black,
    inner xsep=10pt,
    inner ysep=4pt,
    font=\footnotesize\bfseries\sffamily
  },
  pill_accent/.style={
    draw=nmi_accent!80,
    line width=0.8pt,
    rounded corners=12pt,
    fill=white,
    text=nmi_accent,
    inner xsep=10pt,
    inner ysep=4pt,
    font=\footnotesize\bfseries\sffamily
  },
  arrow_style/.style={
    ->, 
    line width=1.2pt, 
    draw=nmi_blue!80!black
  },
  arrow_active/.style={
    ->, 
    line width=1.4pt, 
    draw=nmi_accent
  },
  arrow_inactive/.style={
    ->, 
    line width=1pt, 
    dashed,
    draw=nmi_gray
  }
]

\begin{scope}[shift={(0,0)}]
  \node[panel_half] (pA) at (0,0) {};
  \node[panellabel] at ([xshift=10pt, yshift=-10pt]pA.north west) {a};
  
  \node[title] at (0, 1.6) {Single NO};
  
  \node[stage] (u1) at (-2.2, 0.4) {$u$};
  \node[stage, minimum width=1.4cm, minimum height=1.0cm, fill=nmi_blue!15] (g1) at (0, 0.4) {$G$};
  \node[stage] (out1) at (2.2, 0.4) {$G(u)$};
  
  \draw[arrow_style] (u1) -- (g1);
  \draw[arrow_style] (g1) -- (out1);
  
  \node[note] at (0, -0.9) {One global operator handles the entire compact approximation set.};
  
  \node[pill] at (0, -1.9) {Active weights per query: all of $G$};
\end{scope}

\begin{scope}[shift={(7.8,0)}]
  \node[panel_half] (pB) at (0,0) {};
  \node[panellabel] at ([xshift=10pt, yshift=-10pt]pB.north west) {b};
  
  \node[title] at (0, 1.6) {MoNO Inference};
  
  \node[stage] (u2) at (-2.6, 0.4) {$u$};
  \node[stage, draw=nmi_text, fill=white] (tree) at (-0.9, 0.4) {$\mathcal{T}$};
  \node[stage_active, minimum width=1.0cm, minimum height=0.6cm] (leaf) at (0.9, 0.4) {$G_\ell$};
  \node[stage] (out2) at (2.6, 0.4) {$G_\ell(u)$};
  
  \draw[arrow_style] (u2) -- (tree);
  \draw[arrow_active] (tree) -- (leaf) node[midway, above, font=\scriptsize\color{nmi_accent}, yshift=2pt] {routes};
  \draw[arrow_active] (leaf) -- (out2);
  
  \node[note] at (0, -0.9) {Routing localizes the approximation problem and activates only \textbf{one} expert on each query.};
  
  \node[pill_accent] at (0, -1.9) {Active experts per query: $1$};
\end{scope}

\begin{scope}[shift={(3.9,-4.8)}]
  \node[panel_full] (pC) at (0,0) {};
  \node[panellabel] at ([xshift=10pt, yshift=-7pt]pC.north west) {c};
  
  \node[title, anchor=west] at (-7.0, 1.0) {System-Level Cost};
  \node[note_left, anchor=west] at (-7.0, -0.4) {The routed system may still store many experts across its leaves, even when only one expert is active.};
  
  \node[stage, draw=nmi_text, fill=white, minimum width=1.6cm, minimum height=0.7cm] (root) at (0.5, 0.8) {Router $\mathcal{T}$};
  
  \node[stage_inactive] (child1) at (-1.5, -0.7) {$G_1$};
  \node[stage_active] (child2) at (0.5, -0.7) {$G_2$};
  \node[stage_inactive] (child3) at (2.5, -0.7) {$G_3$};
  
  \draw[arrow_inactive] (root.south west) -- (child1.north);
  \draw[arrow_active] (root.south) -- (child2.north);
  \draw[arrow_inactive] (root.south east) -- (child3.north);
  
  \node[pill] at (5.2, 0.5) {Stored experts: $\Lambda=v^h$};
  \node[pill_accent] at (5.2, -0.6) {Active cost $\neq$ Total cost};
\end{scope}

\end{tikzpicture}
\end{adjustbox}
\caption{\textbf{Qualitative complexity picture.} \textbf{a}, A single neural operator activates one global model on every query. \textbf{b}, A Mixture of Neural Operators (MoNO) routes the input to a specific, smaller expert ($G_\ell$). The primary benefit is a smaller \emph{active} model at evaluation time. \textbf{c}, Total stored size is tracked separately because the routing tree $\mathcal{T}$ may contain many experts across its leaves, decoupling total memory cost from active computational cost.}
\label{fig:complexity}
\end{figure*}

\paragraph{Contributions.}
\begin{enumerate}[leftmargin=1.5em]
\item We prove a routed universal approximation theorem for MoNOs. In the scalar setting stated below, every uniformly continuous nonlinear operator on a compact Sobolev subset can be uniformly approximated by a routed ensemble of neural operators; relative to the constructive single-NO approximation analyzed in this paper, the expert activated on one query is substantially smaller, and in the Lipschitz case the active depth, width, and rank are bounded by $\mathcal{O}(\eps^{-1})$.
\item We derive a quantitative universal approximation theorem for the underlying classical neural-operator architecture. This is an independent secondary result: it isolates how the complexity depends on the diameter of the compact approximation set and on the modulus of continuity, and it also provides the local approximation ingredient behind the routed construction.
\item We identify active complexity as a distinct theoretical resource in operator learning. The analysis separates active expert size, routing depth, and the total number of experts $\Lambda=v^h$, clarifying which quantities routing improves and which costs it pays. The appendix complements this with an inverse-problem case study illustrating why localization can matter under weak stability, and with the observation that a classical neural operator is a degenerate one-expert MoNO.
\item We include two controlled Burgers experiments: a fixed-viscosity control where routing provides no gain, and a variable-viscosity case where learned routing improves active-complexity error. These experiments test the theorem's conditional prediction in a PDE-generated setting.
\end{enumerate}

\paragraph{Perspective.}
Within the constructive framework developed here, routing reduces the expert active on one query relative to \Cref{prop:quantitative_UAT__ON}, while the price is paid in routing depth and number of experts. This distinction matters in settings such as a memory-limited inverse-problem solver, where only the selected expert must be resident in accelerator memory while inactive experts can remain off-device. A naive $\varepsilon$-net argument would only produce local subproblems; the routed architecture also accounts for the routing resources needed to realize them.

\paragraph{Organization.}
\Cref{s:Prelim} introduces the approximation setting, the neural-operator architecture, and the MoNO routing model. \Cref{s:MainResult} states the main theorem and the classical NO approximation bound. \Cref{sec:controlled_experiment} gives two controlled Burgers experiments illustrating the active-versus-total complexity trade-off and its fixed-operator control. \Cref{sec:discussion} explains the proof strategy, the tree construction, and the main modeling trade-offs. The appendices collect the detailed proofs, the routing pseudocode, the inverse-problem application, and the reduction from classical NOs to trivial MoNOs.

\section{Problem Setup and Architectures}
\label{s:Prelim}

This section fixes only the ingredients needed for the main theorem: a compact Sobolev approximation set, the constructive single neural-operator architecture used in the comparison, and the routed MoNO model. Readers mainly interested in the theorem may keep in mind the following picture: a MoNO is a decision tree that sends each input to one expert neural operator.

Let $d^{(1)}, d^{(2)}, d_{in}, d_{out} \in \N$, set $d_i \eqdef d^{(i)}$ for $i=1,2$, and define $D_i \eqdef [0,1]^{d_i}$. We work with Sobolev spaces embedded in ambient $L^2$ spaces, and approximation errors are measured in the relevant $L^2$ norm. For a uniformly continuous map $G:(H,\|\cdot\|_H)\to(H',\|\cdot\|_{H'})$ between separable Hilbert spaces, we write $\omega:[0,\infty)\to[0,\infty)$ for a monotone modulus of continuity satisfying
\[
\|G(u)-G(\tilde u)\|_{H'} \le \omega\bigl(\|u-\tilde u\|_H\bigr)
\qquad \text{for all } u,\tilde u\in H.
\]

\subsection{Sobolev Approximation Sets}

Let $D\subseteq\rr^d$ be compact, let $s>0$, and let $H^s(D)$ denote the usual Sobolev space. For noninteger $s$, writing $m=\lfloor s\rfloor$, we use the equivalent norm
\[
\|f\|_{H^s(D)}
\eqdef
\Bigl(
  \sum_{|\alpha|\le m}\|\partial^\alpha f\|_{L^2(D)}^2
\Bigr)^{1/2}
+
|f|_{H^s(D)},
\]
where
\[
|f|^2_{H^s(D)}
\eqdef
\sum_{|\alpha|=m}
\int_{D\times D}
\frac{|\partial^\alpha f(x)-\partial^\alpha f(y)|^2}{|x-y|^{2(s-m)+d}}
\diff x\,\diff y.
\]
For integer $s$, we use the standard weak-derivative Sobolev norm with derivatives up to order $s$.
For vector-valued inputs we write $H^s(D)^{d_{in}}$ and endow it with the ambient norm $\|\cdot\|_{L^2(D)^{d_{in}}}$.

The compact approximation sets considered throughout the paper are either the centered Sobolev ball
\begin{equation}
\label{eq:compact_set_Sobolev_type}
K
\eqdef
\left\{
f\in H^s(D)^{d_{in}}:\,
\|f\|_{H^s(D)^{d_{in}}}\le R
\right\},
\end{equation}
or, more generally, any closed nonempty subset
\begin{equation}
\label{eq:compact_set_Sobolev_type__noncentered}
K=\overline K
\subseteq
\left\{
f\in H^s(D)^{d_{in}}:\,
\|f-f_0\|_{H^s(D)^{d_{in}}}\le R
\right\},
\end{equation}
for some $f_0\in H^s(D)^{d_{in}}$. By the Rellich--Kondrashov theorem, such sets are compact in $L^2(D)^{d_{in}}$.

\subsection{Neural Network and Neural Operator Models}
\label{s:Neural-networks}

\begin{definition}[Multilayer perceptron]
\label{def:MLPs}
Let $\sigma\in C(\rr)$ and let $d^{(1)},d^{(2)}\in\N$. For a depth $J\in\N_{\ge 0}$ and widths $[d]\eqdef(d_0\eqdef d^{(1)},\dots,d_J\eqdef d^{(2)})$, an MLP with activation $\sigma$ is a finite composition of affine maps and coordinatewise activations,
\begin{equation}
\begin{aligned}
\hat f_\theta(x) &= x^{(J)} + c, \\
x^{(j+1)} &= \sigma\bullet\bigl(A^{(j)}x^{(j)} + b^{(j)}\bigr),
\qquad j=0,\dots,J-1,
\end{aligned}
\end{equation}
where $(A^{(j)},b^{(j)})\in \rr^{d_{j+1}\times d_j}\times \rr^{d_{j+1}}$, $c\in\rr^{d_J}$, the input is $x^{(0)}=x$, and $\sigma\bullet y \eqdef (\sigma(y_i))_i$ denotes coordinatewise activation.
\end{definition}

We write $\mathcal{NN}^{\sigma}_{[d]}$ for this class of networks with widths $[d]$, and $\mathcal{NN}^{\sigma}_{\Delta,W:d^{(1)},d^{(2)}}$ for the union over all such networks with depth at most $\Delta$ and width at most $W$.

\noindent The single-model baseline compared against MoNOs is the following finite-rank neural-operator architecture.
\begin{definition}[Neural operator]
\label{def:neural-operator-v2}
Let $\boldsymbol d\eqdef[d_1,d_2,d_{in},d_{out}]$ and let $L,w,\Delta,N\in\N_{\ge 0}$ with $\max(d_1,d_2,d_{in},d_{out})\le w$. A neural operator
\[
G:(H^{s_1}(D_1)^{d_{in}},\|\cdot\|_{L^2(D_1)^{d_{in}}})
\to
(H^{s_2}(D_2)^{d_{out}},\|\cdot\|_{L^2(D_2)^{d_{out}}})
\]
is given by
\begin{equation}
\label{eq:NORep}
\begin{aligned}
(G(u))(x) &= (K_N^{(L+1)}u_{L+1})(x) + b^{(L+1)}(x), \qquad
u_1(x) = (K_N^{(0)}u)(x) + b^{(0)}(x), \\
u_{\ell+1}(x) &= \tanh\bullet\bigl(W^{(\ell)}u_\ell(x)+b^{(\ell)}\bigr),
\qquad \ell=1,\dots,L,
\end{aligned}
\end{equation}
where $q_1,\dots,q_{L+1}\le w$ are channel widths, $q_{L+1}=d_{out}$, $u_\ell(x)\in\rr^{q_\ell}$, $W^{(\ell)}\in\rr^{q_{\ell+1}\times q_\ell}$, $b^{(\ell)}\in\rr^{q_{\ell+1}}$, and the nonlocal maps
\begin{subequations}
\label{eq:non-local-operator}
\begin{align}
K_N^{(0)}u(x)
&=
\sum_{m,n=1}^N C_{m,n}^{(0)}(u,\varphi_m)_{L^2(D_1)}\psi_n(x),
\label{non-local-operator-1} \\
K_N^{(L+1)}u'(x)
&=
\sum_{m,n=1}^N C_{m,n}^{(L+1)}(u',\psi_m)_{L^2(D_2)}\psi_n(x)
\label{non-local-operator-2}
\end{align}
\end{subequations}
have rank $N$. Here $b^{(0)}$ and $b^{(L+1)}$ are coordinate networks on $D_2$ with width at most $w$, depth at most $\Delta$, and output dimensions $q_1$ and $d_{out}$, respectively.
\end{definition}

We denote this class by $\mathcal{NO}^{\tanh}_{N,w,L,\Delta,\boldsymbol d}$. For a neural operator $G$ in this class, let $P(G)$ denote its number of trainable parameters. Then
\begin{equation}
\label{eq:complexity_NO}
P(G)
\le
\underbrace{L\,w(w+1)}_{W^{(\ell)},\,b^{(\ell)}}
+
\underbrace{2N^2w^2}_{K_N^{(0)},\,K_N^{(L+1)}}
+
\underbrace{2\Delta\,w(w+1)}_{b^{(0)},\,b^{(L+1)}}
\eqdef
\bar P_{\Delta,w,L,N,\boldsymbol d}.
\end{equation}
We use $\bar P_{\Delta,w,L,N,\boldsymbol d}$ as the architecture-level upper bound that appears in the main complexity estimates.

\begin{remark}[Nonlocal operators]
\label{rem:NLOs}
The finite-rank maps in \eqref{eq:non-local-operator} are truncations of integral operators
\[
u \mapsto \left(x \mapsto \int_{\Omega_{\ell}} k^{(\ell)}(x,y)u(y)\,\diff y\right),
\]
where $\Omega_0=D_1$ for the input lifting map and $\Omega_{L+1}=D_2$ for the output map, with matrix-valued kernels of compatible channel dimensions. Choosing orthonormal bases $\{\varphi_n\}$ and $\{\psi_n\}$ and truncating the corresponding series expansion yields $K_N^{(0)}$ and $K_N^{(L+1)}$. This is the sense in which our architecture retains nonlocal structure while keeping the model finite-rank and explicitly countable.
\end{remark}

\begin{remark}[Integral operators in hidden layers]
Including integral operators in the hidden layers, as in some NO variants, is not required for the universality result used here. The hidden-layer operations in \eqref{eq:NORep} already provide the approximation mechanism needed for \Cref{prop:quantitative_UAT__ON}. Adding intermediate integral layers may be useful for modeling, but it does not change the role of the constructive bound studied in this paper \cite{lanthaler2023neural,KovachkiLanthalerMishra_UniFNO_JMLR_2021}.
\end{remark}

\subsection{Mixture of Neural Operators}
\label{s:Prelim__ss:DistributedHypotheses}

\subsubsection{Rooted tree router}
\label{background:rooted_tree}
Instead of a single gating network, we use a finite rooted tree $\mathcal T=(V,E,f_{1,1})$ to organize the experts hierarchically. The leaves of the tree form an approximate net for the compact set $K$, and routing selects a nearest leaf in $L^2$ while the tree records the corresponding root-to-leaf path. This tree structure is the mechanism that converts a global approximation problem into many local ones.

\begin{definition}[Mixture of neural operators]
Let $\boldsymbol d\eqdef[d_1,d_2,d_{in},d_{out}]$ and let $L,W,\Delta,N,h,v\in\N_{\ge 0}$ with $\max(d_1,d_2,d_{in},d_{out})\le W$. A \emph{mixture of neural operators} (MoNO) is a pair $(\mathcal T,\mathcal{NO})$ in which
\[
\mathcal T=(V,E,f_{1,1})
\quad \text{and} \quad
\mathcal{NO}=\{G_{h,\ell}\}_{\ell\in\Lambda},
\]
where $\mathcal T$ is a finite rooted tree of height $h$ and valency $v$ whose nodes are realized by ReLU MLPs in $\mathcal{NN}^{\operatorname{ReLU}}_{L,W:d_1,d_{in}}$, and where each leaf $\ell\in\Lambda$ carries an expert $G_{h,\ell}\in \mathcal{NO}^{\tanh}_{N,W,L,\Delta,\boldsymbol d}$.
\end{definition}

Intuitively, the leaves of $\mathcal T$ form the covering net used for routing, while internal nodes organize those leaf centers through the aggregate hierarchical relation stated in the appendix. Each selected leaf only needs to approximate the target operator on a small local region.

\paragraph{Realization of a MoNO.}
\label{defn:Distributed_Models}
For simplicity, assume $d_{in}=d_{out}=1$. Given $u\in H^{s_1}(D_1)\subset L^2(D_1)$, routing selects a leaf whose center is nearest to $u$ in $L^2(D_1)$ and then follows the corresponding root-to-leaf path in $\mathcal T$. The expert attached to that leaf is evaluated. A pseudocode version is recorded in the appendix as \Cref{alg:realization_mix}. We account explicitly for the tree depth and number of leaves; with the nearest-leaf rule, the routing search uses $\Lambda$ nearest-center comparisons, and the theorem does not assert an optimized nearest-center search procedure. Each comparison is an $L^2$ distance computation on the input discretization, with cost independent of expert depth, width, and rank; this routing-search cost is separate from the single-expert evaluation cost counted by $\operatorname{Activ-Cpl}$.

The \emph{expert-active complexity} of $(\mathcal T,\mathcal{NO})$ is
\[
\operatorname{Activ-Cpl}(\mathcal T,\mathcal{NO})
\eqdef
\max_{\ell\in\Lambda} P(G_\ell)
\le
\bar P_{\Delta,W,L,N,\boldsymbol d},
\]
which counts the expert loaded after routing, not the cost of searching the routing tree. The routing cost is tracked separately through the height $h$ and the leaf count $\Lambda$.
The \emph{total distributed complexity} is
\[
\operatorname{Dist-Cpl}(\mathcal T,\mathcal{NO})
\eqdef
\sum_{\ell\in\Lambda} P(G_\ell)
\le
|\Lambda|\,\bar P_{\Delta,W,L,N,\boldsymbol d}.
\]
The corresponding routing depth is simply the height $h$ of $\mathcal T$.

\begin{remark}[Neural operators are trivial MoNOs]
\label{remark:NO_as_MoNO}
Any classical neural operator can be realized as a one-expert MoNO by using a tree with a single node and routing every input directly to that expert. \Cref{ex:Centralized_as_distributed} records this identification explicitly.
\end{remark}

\section{Main Results}
\label{s:MainResult}

We now approximate a nonlinear operator
\[
G^+:(H^{s_1}(D_1)^{d_{in}},\|\cdot\|_{L^2(D_1)^{d_{in}}})
\to
(H^{s_2}(D_2)^{d_{out}},\|\cdot\|_{L^2(D_2)^{d_{out}}})
\]
uniformly on a compact subset $K$. By continuity on a compact set, $G^+$ admits a modulus of continuity $\omega$; when invoking the quantitative single-NO proposition, we may replace $\omega$ by a concave majorant. For notational simplicity, we state the main theorem in the scalar case $d_{in}=d_{out}=1$.

\paragraph{Theorem overview.}
The theorem reports three quantities: uniform approximation error on $K$, the size of the expert active on one query, and the routing cost through tree depth and leaf count. It is a localization principle relative to the constructive single-NO bound of \Cref{prop:quantitative_UAT__ON}; total stored complexity, routing-search complexity, and other sparse/adaptive NO classes are accounted for separately from this expert-active comparison.

\paragraph{Construction parameters.}
The displayed quantities below are the parameters used by the constructive proof. Let $C_R$ be the constant from \Cref{lem:complete_n_ary_tree_counting}. Define
\[
R_{G,K}
\eqdef
\sup_{u\in K}\|G^+(u)\|_{H^{s_2}([0,1]^{d_2})}
\quad\text{and choose}\quad
N_\varepsilon
\in
\mathcal{O}\!\left(
[\omega^{-1}(\varepsilon)]^{-d_1/s_1}
\vee
\varepsilon^{-d_2/s_2}
\right),
\]
with hidden constants depending on $R$ and $R_{G,K}$. Set
\[
\rho_\varepsilon
\eqdef
\omega^{-1}\!\left(
\frac{\varepsilon}{(2+N_\varepsilon/2)N_\varepsilon}
\right),
\qquad
\delta_\varepsilon
\eqdef
\frac{\rho_\varepsilon^2}{4C_R},
\]
and define the routing depth and leaf count by
\[
h_\varepsilon
\eqdef
\left\lceil
\log_v\!\bigl(2^{\lceil \delta_\varepsilon^{-d_1/s_1}\rceil}-1\bigr)
\right\rceil,
\qquad
\Lambda
\eqdef
v^{h_\varepsilon}.
\]

\begin{theorem}[Universal approximation for MoNOs]
\label{thrm:Main}
Assume the setting of \Cref{s:Prelim__ss:DistributedHypotheses}. Let $K$ be as in \eqref{eq:compact_set_Sobolev_type}, suppose $s_i>d_i$ for $i=1,2$, fix $\varepsilon>0$, and choose a valency $v\in\N_{\ge 2}$. For every uniformly continuous operator
\[
G^+:(H^{s_1}([0,1]^{d_1}),\|\cdot\|_{L^2(D_1)})
\to
(H^{s_2}([0,1]^{d_2}),\|\cdot\|_{L^2(D_2)})
\]
with modulus of continuity $\omega$ and $R_{G,K}<\infty$, there exists a MoNO $(\mathcal T,\mathcal{NO})$ with realization
\[
G:(H^{s_1}([0,1]^{d_1}),\|\cdot\|_{L^2(D_1)})
\to
(H^{s_2}([0,1]^{d_2}),\|\cdot\|_{L^2(D_2)})
\]
such that
\[
\sup_{u\in K}\|G^+(u)-G(u)\|_{L^2([0,1]^{d_2})}\le \varepsilon.
\]
Its expert-active parameter count satisfies
\[
\operatorname{Activ-Cpl}(\mathcal T,\mathcal{NO})
\in
\mathcal{O}\!\left(N_\varepsilon^4\right),
\]
its routing depth is $h_\varepsilon$, and its leaf count is $\Lambda=v^{h_\varepsilon}$.
The router construction is summarized in \Cref{lem:complete_n_ary_tree_counting}, and the full approximate hierarchical relation used in the proof is recorded in \Cref{s:Proof__ss:TreeLemma}. The simplified complexity comparison is summarized in \Cref{tab:thm:quantitative_UAT__ON}.
\end{theorem}

\begin{proof}
See \Cref{s:Proofs__ss:MainResult}.
\end{proof}

\Cref{thrm:Main} is a structural decomposition theorem for operator approximation on compact sets: localization controls the active expert, while the routing tree records the geometric cost of realizing that localization. Within this constructive framework, the depth and memory footprint of the \emph{active} expert can be controlled separately from the total number of experts, but the routed ensemble can still be large. The routing cost made explicit by the theorem is
\[
(h_\varepsilon,\Lambda)=\bigl(h_\varepsilon,v^{h_\varepsilon}\bigr).
\]
Thus the result concerns \emph{active} complexity relative to the constructive single-NO baseline, while total stored parameter count is tracked separately; see \Cref{fig:complexity}.

\begin{corollary}[Lipschitz targets]
\label{cor:lipschitz_takeaway}
Assume the setting of \Cref{thrm:Main}, and suppose that $G^+$ is Lipschitz on $K$. Then the construction in \Cref{thrm:Main} can be chosen so that the active expert has depth, width, and rank in
\[
\mathcal{O}(\varepsilon^{-1}).
\]
The contribution is the active-complexity accounting: within this construction, the rate is achieved by the expert active on one query while routing resources are recorded separately. The constructive single-NO depth bound in \Cref{tab:prop:quantitative_UAT__ON} is much larger as $\varepsilon\downarrow 0$, so the comparison isolates the effect of localization while holding the constructive approximation ingredient fixed.
\end{corollary}

\begin{proof}
If $\omega(t)\lesssim t$, then $\omega^{-1}(\varepsilon)\gtrsim \varepsilon$, so \Cref{thrm:Main} gives $N_\varepsilon\in\mathcal{O}(\varepsilon^{-1})$. The proof of \Cref{thrm:Main} constructs each active expert with depth, width, and rank of order $N_\varepsilon$, which yields the first claim. The second is the upper-bound comparison recorded in \Cref{tab:prop:quantitative_UAT__ON}.
\end{proof}

For readability, set
\[
z_\varepsilon \eqdef \max\{\varepsilon^{-1},[\omega^{-1}(\varepsilon)]^{-1}\}.
\]
We use this shorthand only in the following simplified table, retaining the maximum to cover Lipschitz and weaker moduli uniformly.

\begin{table}[ht]
\ra{1.25}
\centering
\begin{adjustbox}{max width=0.98\linewidth,center}
\begin{tabular}{@{}l|ll@{}}
\cmidrule[0.3ex](){1-3}
\multicolumn{1}{c}{\textbf{Parameter}} & \multicolumn{2}{c}{\textbf{Simplified Constructive Upper Bounds}} \\
\cmidrule[0.2ex](){1-3}
& \textit{MoNO active expert} & \textit{Single NO of \Cref{prop:quantitative_UAT__ON}} \\
Depth $(L)$ &
$\mathcal{O}(z_\varepsilon)$ &
$\mathcal{O}\!\left(
\dfrac{z_\varepsilon}
{
\bigl(
\omega^{-1}(
\varepsilon z_\varepsilon^{-2}
)
\bigr)^{2z_\varepsilon}
}
\right)$ \\
Width $(W)$ &
$\mathcal{O}(z_\varepsilon)$ &
$\mathcal{O}(z_\varepsilon)$ \\
Rank $(N)$ &
$\mathcal{O}(z_\varepsilon)$ &
$\mathcal{O}(z_\varepsilon)$ \\
Bases $(\varphi_\cdot,\psi_\cdot)$ &
\multicolumn{2}{c}{Piecewise polynomial \cite{birman1967piecewise}} \\
\cmidrule[0.3ex](){1-3}
$\#$ experts / leaves &
$\Lambda = v^{h_\varepsilon}$ &
$1$ \\
$h$ (routing depth) &
$h_\varepsilon$; see \Cref{thrm:Main} &
trivial one-node route \\
\cmidrule[0.3ex](){1-3}
\end{tabular}
\end{adjustbox}
\caption{Simplified constructive comparison between the routed MoNO approximation of \Cref{thrm:Main} and the specific single-NO construction of \Cref{prop:quantitative_UAT__ON}. The exact theorem and proposition bounds are stated in the text; the table records the dominant scaling used in the discussion. The single-NO column is the constructive baseline used throughout the paper. The comparison separates the size of the active expert from the size of the routed ensemble. Since $\Lambda=v^h$, the total number of experts can still be large even when the active expert is small.}
\label{tab:thm:quantitative_UAT__ON}
\end{table}

We now record the constructive single-NO baseline used throughout the comparison.

\begin{proposition}[Approximation rates for classical neural operators]
\label{prop:quantitative_UAT__ON}
Let $D_i=[0,1]^{d_i}$ for $i=1,2$, let $K$ be as in \eqref{eq:compact_set_Sobolev_type__noncentered}, and let
\[
\boldsymbol d\eqdef[d_1,d_2,d_{in},d_{out}].
\]
Let
\[
G^+:
(H^{s_1}(D_1)^{d_{in}},\|\cdot\|_{L^2(D_1)^{d_{in}}})
\to
(H^{s_2}(D_2)^{d_{out}},\|\cdot\|_{L^2(D_2)^{d_{out}}})
\]
be uniformly continuous with concave modulus of continuity $\omega$. Assume that the following Sobolev output radius is finite, and set
\[
R_{K,\mathrm{in}}
\eqdef
\sup_{u\in K}\|u\|_{H^{s_1}(D_1)^{d_{in}}},
\qquad
R_{K,\mathrm{out}}
\eqdef
\sup_{u\in K}\|G^+(u)\|_{H^{s_2}(D_2)^{d_{out}}}.
\]
For every $\varepsilon>0$, there exist $N,L,W\in\N_{\ge 0}$, orthonormal bases of $L^2(D_1)$ and $L^2(D_2)$ with basis elements in $H^{s_1}(D_1)$ and $H^{s_2}(D_2)$, respectively, and a neural operator
\[
G\in \mathcal{NO}^{\tanh}_{N,W,L,1,\boldsymbol d}
\]
such that
\[
\sup_{u\in K}\|G^+(u)-G(u)\|_{L^2(D_2)^{d_{out}}}\le \varepsilon.
\]
The resulting rank, depth, and width bounds are listed in \Cref{tab:prop:quantitative_UAT__ON}.
\end{proposition}

\begin{proof}
See \Cref{proof:UAT_ON}.
\end{proof}

\begin{table}[ht]
\ra{1.25}
\centering
\begin{adjustbox}{max width=\linewidth,center}
\begin{tabular}{@{}ll@{}}
\cmidrule[0.3ex](){1-2}
\textbf{Parameter} & \textbf{Upper bounds} \\
\midrule
Depth $(L)$ &
$\mathcal{O}\!\Bigl(
Nd_{out}
\bigl(|D_2|^{1/2}\operatorname{diam}_{L^2(D_1)^{d_{in}}}(K)\bigr)^{Nd_{in}}
\bigl(
\omega^{-1}(
\tfrac{\varepsilon|D_2|^{1/2}}{(2+N d_{in}/2)Nd_{out}}
)
\bigr)^{-2Nd_{in}}
\Bigr)$ \\
Width $(W)$ & $Nd_{in}+Nd_{out}+2$ \\
Rank $(N)$ &
$\mathcal{O}\!\left(
\left(\frac{R_{K,\mathrm{in}}}{\omega^{-1}(\varepsilon)}\right)^{d_1/s_1}
\vee
\left(\frac{R_{K,\mathrm{out}}}{\varepsilon}\right)^{d_2/s_2}
\right)$ \\
Bases $(\varphi_\cdot,\psi_\cdot)$ & Piecewise polynomial \cite{birman1967piecewise} \\
\bottomrule
\end{tabular}
\end{adjustbox}
\caption{Quantitative approximation rates for the classical neural-operator architecture used throughout the paper. The rank bound depends on the Sobolev radii $R_{K,\mathrm{in}}$ and $R_{K,\mathrm{out}}$ defined in \Cref{prop:quantitative_UAT__ON}.}
\label{tab:prop:quantitative_UAT__ON}
\end{table}

\section{Controlled Burgers Experiments}
\label{sec:controlled_experiment}

The theorem makes a conditional prediction: routing can reduce the size of the expert evaluated on one query when the operator family decomposes into localized regimes, while a homogeneous family already captured by one global model should show no routing advantage. We test this active-versus-total trade-off directly with two controlled Burgers experiments. In both experiments, inputs are random periodic Fourier initial conditions
\[
u_0(x)=\sum_{j=1}^{8} a_j\sin(2\pi jx)+b_j\cos(2\pi jx),
\]
sampled on a grid of size $64$. Targets are generated by an explicit finite-volume solver for the one-dimensional viscous Burgers equation at final time $T=0.4$. The route label $r(u_0)\in\{0,1,2,3\}$ is the empirical quartile of the initial-condition steepness $\|\partial_xu_0\|_\infty$. In the homogeneous control, every input is evolved with viscosity $\nu=0.01$, so the route labels do not correspond to different PDE operators. In the heterogeneous experiment, the four steepness quartiles are evolved with viscosities $(0.05,0.02,0.008,0.003)$. The heterogeneous target is therefore PDE-generated rather than a soft mixture of experts, but it is deliberately structured so that localization has a meaningful regime label. This defines a single constructed input-output map on the sampled distribution; the viscosity-by-steepness coupling is a controlled route-recovery task for a structural input regime, rather than a claim about inherent heterogeneity in physical Burgers data.

Each experiment compares four models. The active-matched global neural operator has the same size as one MoNO expert. The total-matched global neural operator is widened until its parameter count matches the stored learned-router MoNO. The oracle MoNO uses the steepness label $r(u_0)$ to choose the expert during both training and testing; it tests whether the chosen partition creates easier local subproblems. The learned-router MoNO uses the same expert architecture, trains the router by supervised classification of $r(u_0)$, and evaluates only the expert selected by the learned router at test time. Thus the learned-router row is a supervised-routing measurement. The active-matched global model and each MoNO expert use a 1D Fourier-style neural operator \cite{li2021fourier} with width $16$, $10$ Fourier modes, and $4$ spectral blocks; the total-matched global model uses width $33$. Full training and routing details are in \Cref{app:experiment_diagnostics}.

\begin{figure*}[t]
\centering
\includegraphics[width=\textwidth]{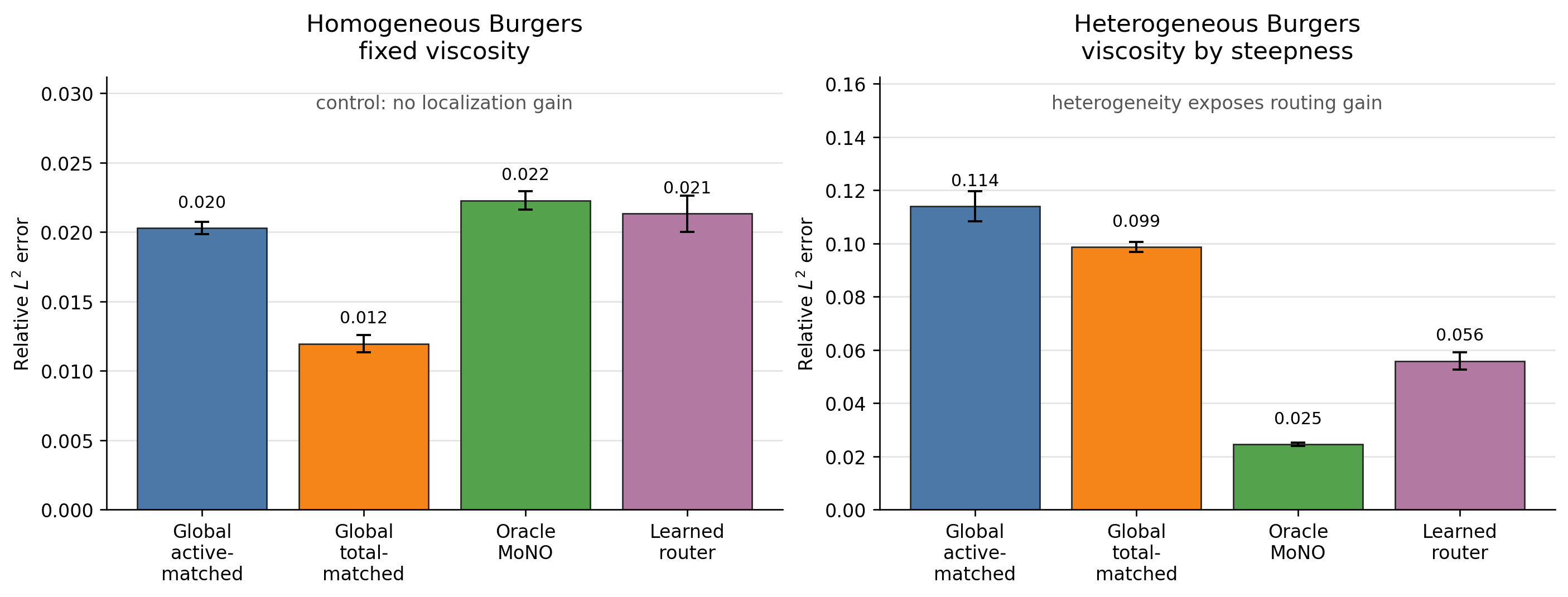}
\caption{\textbf{Controlled Burgers experiments.} Bars show mean relative $L^2$ test error over three seeds, with standard-deviation error bars. Left: on a homogeneous fixed-viscosity Burgers operator, routing gives no gain over the active-matched global model and the larger global baseline is best. Right: on a heterogeneous variable-viscosity Burgers family, learned routing improves over both global baselines while keeping the active expert at the active-matched scale.}
\label{fig:active_complexity_experiment}
\end{figure*}

\Cref{fig:active_complexity_experiment} shows the intended contrast. In the homogeneous control, the active-matched global NO attains relative $L^2$ error $0.0203\pm0.0004$, the oracle MoNO attains $0.0223\pm0.0007$, the learned-router MoNO attains $0.0213\pm0.0013$, and the larger total-matched global NO is best at $0.0119\pm0.0006$. This negative control confirms that routing is useful here because of operator heterogeneity, not architecture alone. In the heterogeneous experiment, the active-matched global NO uses $21{,}905$ active parameters and attains $0.1140\pm0.0056$, while the total-matched global NO uses $92{,}863$ active parameters and attains $0.0986\pm0.0019$. The oracle MoNO attains $0.0245\pm0.0007$, showing that the steepness-viscosity partition creates substantially easier local problems. The learned-router MoNO attains $0.0558\pm0.0033$ with route accuracy $0.858\pm0.011$, using $21{,}905$ active expert parameters, $27{,}477$ active parameters including the router, and $93{,}192$ total stored parameters. Thus the heterogeneous gain is tied to active complexity rather than total stored size. Full parameter accounting is in \Cref{tab:burgers_active_complexity}.

\section{Proof Strategy and Discussion}
\label{sec:discussion}
\label{s:Main__ss:Discussion}

The proof is organized around an explicit constructive baseline. \Cref{prop:quantitative_UAT__ON} gives a single-NO approximation theorem whose complexity deteriorates with the diameter of the compact approximation set. \Cref{lem:complete_n_ary_tree_counting} supplies a tree-indexed leaf cover with explicit routing depth and leaf count. Combining them yields a routed architecture whose \emph{expert-active} complexity is governed by local diameter, while the cost is carried by the tree and expert ensemble.

\paragraph{What the routing lemma contributes.}
A plain $\varepsilon$-net argument only partitions $K$ into local problems. The routing lemma packages that localization into a tree-indexed cover and ties the local scale to explicit routing depth and leaf count. The full statement, including router-size bounds, is in the appendix.

\paragraph{Scope of the comparison.}
The theorem is a constructive statement about how routing changes one approximation strategy. Its baseline is the explicit single-NO construction of \Cref{prop:quantitative_UAT__ON}; within that framework, routing replaces one global problem by local ones, reducing the expert active on a query while exposing the cost through routing depth and expert count $\Lambda=v^h$.

This comparison is complementary to regimes where shallow universal constructions, FNO/PCA-based surrogates, kernel methods, random features, structure-aware PDE architectures, or intrinsic Barron/analytic/low-dimensional structure already give algebraic total complexity \cite{barron1993universal,Chen2_OG_NeuralOperators_IEEE_1995,lu2021learning,KovachkiLanthalerMishra_UniFNO_JMLR_2021,adcock2022near,herrmann2022neural,marcati2023exponential,PCANetErrorBounds_JMLR_2023,batlle2024kernel,nelsen2024operator,lanthaler2025parametric}. Those results exploit additional structure to control total complexity; the routed bound instead addresses expert-active complexity in the worst-case Sobolev/uniform-continuity setting.

Because \Cref{prop:quantitative_UAT__ON} supplies both the single-NO baseline and the local experts, \Cref{thrm:Main} isolates localization while holding the constructive approximation ingredient fixed. Replacing the baseline by a different shallow universal construction would answer a different comparison question.

\paragraph{Auxiliary remarks.}
For general compact domains, the appendix proves a domain-extension lemma that transfers the argument to ambient cubes without changing the modulus of continuity; see \Cref{prop:extension_lemma}. It also records the routing pseudocode, inverse-problem case study, and one-expert reduction. We use standard activations rather than super-expressive families \cite{yarotsky2020phase,yarotsky2021elementary,shen2021deep,jiao2023deep}, so the gain comes from routing and localization.

\appendix

\section{Detailed Proofs}
\label{app:technical-proofs}
\label{s:Proof_Details}
The proofs rely on several key notions from constructive approximation. We begin by recalling the concept of {\it metric entropy} for a compact subset of $L^2(D)$. For each $k\in \mathbb{N},$ this is quantified by its $k$th entropy number, defined as
\[
        e_k(K)
    \eqdef
        \inf
        \left\{\delta>0:\,
            \exists f_1,\dots,f_{2^k-1}\in L^2({D}),
            \quad
            \max_{f\in K}\,
                \min_{i=1,\dots,2^k-1}\,
                \|
                    f-f_i
                \|_{L^2({D})}
            <
                \delta
        \right\}.
\]

\noindent See \cite[Chapter 13]{LorentzGoliteschekMakovoz_1996_CAAdvProbsBook} for further details.
We also recall the definition of the \textit{Kolmogorov linear $N$-width}. For a subset $A$ of a Banach space $X$ and $N\in \mathbb{N}_{\ge0}$,
\[
        d_N(A;X)
    \eqdef
        \inf_{L\in \mathcal{L}_N}
        \,
        \sup_{x\in A}\,\inf_{y\in L}
        \,
            \|x-y\|_{X},
\]
where $\mathcal{L}_N$ is the set of $N$-dimensional linear subspaces of $X$; see \cite[Chapter 1.1]{pinkus2012n}.

The quantitative argument proceeds in three stages. First, we build constructive piecewise-polynomial approximation spaces for the compact input set~\eqref{eq:compact_set_Sobolev_type__noncentered} and for its image under $G^+$. Next, following the finite-rank reduction used in \cite[Proof of Theorem 11]{kovachki2021neural}, we project the operator-learning problem to a finite-dimensional surrogate between Euclidean spaces. Finally, we apply the quantitative universal approximation theorem of~\cite[Proposition 53]{kratsios2022universal} for smooth activations and then choose the approximation rank $N$ so that the projection and network errors both stay below the target tolerance.

\subsection[Proof of the single-NO approximation proposition]{Proof of Proposition~\ref{prop:quantitative_UAT__ON}}\label{proof:UAT_ON}
\paragraph{\textit{Step 1: Constructive low-dimensional subspaces}}
The quantitative approximation argument only needs constructive approximation spaces for the compact input set $K$ and its image $G^+(K)$. Exact recentering is unnecessary here.
Let
\[
R_{K,\mathrm{in}}
\eqdef
\sup_{u\in K}\|u\|_{H^{s_1}(D_1)^{d_{in}}},
\qquad
R_{K,\mathrm{out}}
\eqdef
\sup_{u\in K}\|G^+(u)\|_{H^{s_2}(D_2)^{d_{out}}}.
\]
The input radius is finite by the definition of $K$, and the output radius is finite by the assumption in \Cref{prop:quantitative_UAT__ON}.
By \cite[Theorems 3.3 and 5.1]{birman1967piecewise}, for every $N\in\mathbb{N}$ there exist $N$-dimensional scalar piecewise-polynomial subspaces
\[
V_{N,K}^{(1)}\subseteq L^2(D_1)
\qquad\text{and}\qquad
W_{N,K}^{(2)}\subseteq L^2(D_2)
\]
whose componentwise products satisfy
\begin{equation}
\label{eq:constructive_input_width}
\sup_{\|u\|_{H^{s_1}(D_1)^{d_{in}}}\le 1}
\inf_{y\in (V_{N,K}^{(1)})^{d_{in}}}
\|u-y\|_{L^2(D_1)^{d_{in}}}
\lesssim
N^{-s_1/d_1},
\end{equation}
and
\begin{equation}
\label{eq:constructive_output_width}
\sup_{\|v\|_{H^{s_2}(D_2)^{d_{out}}}\le 1}
\inf_{y\in (W_{N,K}^{(2)})^{d_{out}}}
\|v-y\|_{L^2(D_2)^{d_{out}}}
\lesssim
N^{-s_2/d_2}.
\end{equation}
Consequently,
\begin{equation}
\label{eq:constructive_input_width_scaled}
\sup_{u\in K}
\inf_{y\in (V_{N,K}^{(1)})^{d_{in}}}
\|u-y\|_{L^2(D_1)^{d_{in}}}
\lesssim
R_{K,\mathrm{in}}\,N^{-s_1/d_1},
\end{equation}
and
\begin{equation}
\label{eq:constructive_output_width_scaled}
\sup_{u\in K}
\inf_{y\in (W_{N,K}^{(2)})^{d_{out}}}
\|G^+(u)-y\|_{L^2(D_2)^{d_{out}}}
\lesssim
R_{K,\mathrm{out}}\,N^{-s_2/d_2}.
\end{equation}

Choose orthonormal bases $\{\varphi_n\}_{n=1}^{\infty}$ and $\{\psi_n\}_{n=1}^{\infty}$ of $L^2(D_1)$ and $L^2(D_2)$ so that
\[
\operatorname{span}\{\varphi_n\}_{n=1}^N=V_{N,K}^{(1)},
\qquad
\operatorname{span}\{\psi_n\}_{n=1}^N=W_{N,K}^{(2)},
\]
and the basis elements are piecewise polynomial.
\paragraph{ \textit{Step 2: Finite-dimensional encoding and decoding}}
For $N \in \mathbb{N}$, define the linear maps $F_{Nd_{in}} : L^2(D_1)^{d_{in}} \to \mathbb{R}^{Nd_{in}}$ and $F_{Nd_{out}} : L^2(D_2)^{d_{out}} \to \mathbb{R}^{Nd_{out}}$ by
\begin{align}\label{def:F_N}
F_{Nd_{in}}u \eqdef \left( (u, \varphi_1),..., (u, \varphi_N) \right) \in \mathbb{R}^{Nd_{in}}
,\hspace{1.0 em}
F_{Nd_{out}}v \eqdef \left( (v, \psi_1),..., (v, \psi_N) \right) \in \mathbb{R}^{Nd_{out}},
\end{align}
where $u=(u_1,...,u_{d_{in}}) \in L^2(D_1)^{d_{in}}$ and $
(u, \varphi_n)=
\left(
(u_1, \varphi_n)_{L^2(D_1)},...,(u_{d_{in}}, \varphi_n)_{L^2(D_1)}
\right) \in \mathbb{R}^{d_{in}}
$. Similarly, $v=(v_1,...,v_{d_{out}}) \in L^2(D_2)^{d_{out}}$ and
$
(v, \psi_n) = \left(
(v_1, \psi_n)_{L^2(D_2)},...,(v_{d_{out}}, \psi_n)_{L^2(D_2)}
\right) \in \mathbb{R}^{d_{out}}$.
We define $G_{Nd_{in}} : \mathbb{R}^{Nd_{in}} \to L^2(D_1)^{d_{in}}$ and $G_{Nd_{out}} : \mathbb{R}^{Nd_{out}} \to L^2(D_2)^{d_{out}} $ by
\begin{align}\label{def:G_N}
G_{Nd_{in}} \alpha \eqdef \sum_{n  \le  N} \alpha_n \varphi_n
,\hspace{1.0 em}
G_{Nd_{out}} \beta \eqdef \sum_{n  \le  N} \beta_n
\psi_n
,
\end{align}
where $
\alpha = (\alpha_1,...,\alpha_N) \in \mathbb{R}^{Nd_{in}}$, with $\alpha_n \in \mathbb{R}^{d_{in}}$, and $\beta = (\beta_1,...,\beta_N) \in \mathbb{R}^{Nd_{out}}$, $\beta_n \in \mathbb{R}^{d_{out}}.
$

By the $1$-bounded approximation property ($1$-BAP) we note that
\begin{equation}
\label{eq:MAP}
\|F_{Nd_{in}}\|_{\mathrm{op}} = \|F_{Nd_{out}}\|_{\mathrm{op}} = \|G_{Nd_{in}}\|_{\mathrm{op}} = \|G_{Nd_{out}}\|_{\mathrm{op}} = 1.
\end{equation}
\noindent Let $P_{V_N} : L^2(D_1) \to L^2(D_1)$ and $P_{W_N} : L^2(D_2) \to L^2(D_2)$ be an orthogonal projection onto $V_N$ and $W_N$ respectively; where
\[
\smash{
V_N \eqdef \mathrm{span}\{ \varphi_n \}_{n  \le  N}
,\hspace{1.0 em}
\mbox{ and }
\hspace{1.0 em}
W_N \eqdef \mathrm{span}\{ \psi_n \}_{n  \le  N}
.
}
\]
We denote by
\begin{align}
\label{def:C-K-N}
 C_{K}(N)  &\eqdef \sup_{a \in K} \|(I-P_{V_N})a\|_{L^2(D_1)^{d_{in}}} ,
\\
\label{def:C-GK-N}
    C_{G^{+}(K)}(N)
& \eqdef
    \sup_{a \in K} \|(I-P_{W_N})G^{+}(a)\|_{L^2(D_2)^{d_{out}}}
=
    \sup_{u \in G^+(K)} \|(I-P_{W_N}) u\|_{L^2(D_2)^{d_{out}}}
.
\end{align}
$C_K(N)$ and $C_{G^+(K)}(N)$ denote the constructive projection errors associated with the chosen componentwise subspaces $V_{N,K}^{(1)}$ and $W_{N,K}^{(2)}$.
We will bound $C_K(N)$ and $C_{G^+(K)}(N)$ later in the proof.
For $u \in L^2(D_1)^{d_{in}}$,
\[
G_{Nd_{in}}F_{Nd_{in}}u = \sum_{n  \le  N} (u, \varphi_n) \varphi_n=\left(P_{V_N}u_1, ..., P_{V_N}u_{d_{in}} \right),
\]
and $v \in L^2(D_2)^{d_{out}}$,
$
G_{Nd_{out}}F_{Nd_{out}}v = \sum_{n  \le  N} (v, \psi_n) {\psi}_n=\left(P_{{W}_N}v_1, ..., P_{{W}_N}v_{d_{out}} \right).
$
We now choose $N=N(\varepsilon, K, G^{+}) \in \mathbb{N}$ such that
\begin{equation}
\label{eq:C_estimates__earlyform}
        C_{G^{+}(K)}(N) + \omega(C_{K}(N))
    \le
        \varepsilon/2
.
\end{equation}
Define $
G_1\eqdef G_{Nd_{out}}F_{Nd_{out}} \circ G^{+} \circ G_{Nd_{in}} F_{Nd_{in}}.
$
Then, for $a \in K$, we estimate
\allowdisplaybreaks
\begin{align}\label{est:UAT1}
\| G^{+}(a) &- G_1(a) \|_{L^2(D_2)}
 \le
\| G^{+}(a) - G_{Nd_{out}}F_{Nd_{out}}G^{+}(a) \|
\nonumber
\\
& +
\| G_{Nd_{out}}F_{Nd_{out}}G^{+}(a) - G_{Nd_{out}}F_{Nd_{out}}G^{+} (G_{Nd_{in}}F_{Nd_{in}}a) \|
\nonumber
\\
&
 \le
C_{G^{+}(K)}(N)
+
\|G_{Nd_{out}}F_{Nd_{out}}\|_{\mathrm{op}} \omega(C_{K}(N))
 \le  C_{G^{+}(K)}(N)  + \omega(C_{K}(N)) \le
{\varepsilon} / 2.
\end{align}
\paragraph{\textit{Step 3: Representation by finite-rank operators}}
We denote by $
\psi = F_{Nd_{out}} \circ G^{+} \circ G_{Nd_{in}} : \mathbb{R}^{Nd_{in}} \to \mathbb{R}^{Nd_{out}}.
$
Since $\|F_{Nd_{out}}\|_{\mathrm{op}} = 1$ and $\| G_{Nd_{in}}\|_{\mathrm{op}} = 1$, the map
$\psi $ is uniformly continuous with a concave modulus of continuity $\omega$.

Choose the piecewise-polynomial bases above so that, after reordering and increasing $N$ by at most one if necessary, they include the constant functions $\varphi_1 = 1/|D_1|^{1/2}$ and $\psi_1 = 1/|D_2|^{1/2}$. This harmless rank change is absorbed in the stated order bounds.
Thus,
\[
F_{Nd_{in}}u(x) =
\sum_{m\leq N}
C^{(0)}_{m,1}(u, \varphi_m) \psi_1(x), \qquad x \in D_2,
\]
\noindent where $C^{(0)} \in \mathbb{R}^{Nd_{in} \times d_{in}}$ has block structure $C^{(0)}_{m,1}
=(
{\boldsymbol{\mathrm{O}}},...,{\boldsymbol{\mathrm{O}}}, \overbrace{\mathrm{diag}(|D_2|^{1/2})}^{m\text{th}}, {\boldsymbol{\mathrm{O}}},\dots,{\boldsymbol{\mathrm{O}}}
)^{\top}
$, with ${\boldsymbol{\mathrm{O}}}$ the zero matrix in $\mathbb{R}^{d_{in}\times d_{in}}$, and $\mathrm{diag}(|D_2|^{1/2})$ the diagonal matrix in $\mathbb{R}^{d_{in}\times d_{in}}$ whose diagonal entries are all $|D_2|^{1/2}$.

We now define the finite rank operator $K^{(0)}_N : L^2(D_1)^{d_{in}} \to L^2(D_2)^{Nd_{in}}$ by $
K^{(0)}_{N}u(x) \eqdef \sum_{m \le  N} C^{(0)}_{m,1}(u,\varphi_m) {\psi}_1(x)$, for $x \in D_2$.
Then,
\begin{equation*}
\resizebox{1\linewidth}{!}{$
\begin{aligned}
\left \Vert K^{(0)}_{N}u \right \Vert_{L^2(D_2)^{N d_{in}}}
=
\left\Vert \sum_{m \leq N} C^{(0)}_{m,1}(u,\varphi_m) \psi_1 \right \Vert
&=
\left(
\left \Vert
\sum_{m \leq N}
C^{(0)}_{m,1}(u,\varphi_m)
\right \Vert_{L^2(D_2)^{N d_{in}}}^{2}
\right)^{1/2}
\\
&=
|D_2|^{1/2}
\left(
\sum_{m \leq N}
\left \Vert (u,\varphi_m) \right \Vert_{2}^{2}
\right)^{1/2}
\leq
|D_2|^{1/2}
\|u\|_{L^2(D_1)^{d_{in}}}.
\end{aligned}
$}
\end{equation*}
Hence,
$
\|K^{(0)}_{N}\|_{\mathrm{op}} \leq
|D_2|^{1/2}.
$
%
For $u \in L^2(D_1)^{d_{in}}$, $F_{Nd_{in}}u = K^{(0)}_{N}u$.
Similarly, $G_{Nd_{out}} \beta
=
\sum_{n \leq N}\beta_{n}{\psi}_{n},
$ where $\beta = (\beta_1,...,\beta_N) \in \mathbb{R}^{Nd_{out}}$, $\beta_n \in \mathbb{R}^{d_{out}}$. As $
\beta_{n} = (1/|D_2|^{1/2}) (\beta_n, {\psi}_1),$ we have
$
G_{Nd_{out}} \beta =
\sum_{n  \le  N} C^{(L+1)}_{1,n}(\beta,{\psi}_1){\psi}_n(x)
$ where $C^{(L+1)}_{1,n} \in \mathbb{R}^{d_{out} \times Nd_{out}}$ is given by
$$
C^{(L+1)}_{1,n} =
(
{\boldsymbol{\mathrm{O}}},...,{\boldsymbol{\mathrm{O}}}, {\mathrm{diag}(1/|D_2|^{1/2})}, {\boldsymbol{\mathrm{O}}},...,{\boldsymbol{\mathrm{O}}}
)^\top,$$ where ${\boldsymbol{\mathrm{O}}}$ is the zero matrix in $\mathbb{R}^{d_{out}\times d_{out}}$, nonzero at the $n$th position,
and $\mathrm{diag}(1/|D_2|^{1/2})$ is the diagonal matrix in $\mathbb{R}^{d_{out}\times d_{out}}$  whose entries are all $1/|D_2|^{1/2}$.
We define the finite rank operator $K^{(L+1)}_{N} : L^2(D_2)^{Nd_{out}} \to L^2(D_2)^{d_{out}}$ by $K^{(L+1)}_{N}v(x)
=
\sum_{n \le  N} C^{(L+1)}_{1,n}(v, \psi_1) \psi_n(x)$,  for $x \in D_2$. Note that for $v \in L^2(D_2)^{Nd_{out}}$
\begin{align*}
\|K^{(L+1)}_{N}v\|_{L^2(D_2)^{d_{out}}}
&=
\| \sum_{n \leq N} C^{(L+1)}_{1,n}(v,\psi_1) \psi_n \|
=
\left(
\sum_{n \leq N}
\|
C^{(L+1)}_{1,n}(v,\psi_1)
\|_{2}^{2}
\right)^{1/2}
\\
&
=
(1/|D_2|^{1/2})
\left(
\sum_{n \leq N}
\|(v_n, \psi_1)\|_{2}^{2}
\right)^{1/2}
\leq
(1/|D_2|^{1/2}) \|v\|_{L^{2}(D_2)^{Nd_{out}}}
\end{align*}
Hence, $
\|K^{(L+1)}_{N}\|_{\mathrm{op}} \leq (1/|D_2|^{1/2}).
$
By construction,
$K^{(L+1)}_{N} \beta = G_{Nd_{out}}\beta$ if $\beta \in \mathbb{R}^{Nd_{out}}$.
Therefore, the operator $G_1$ can be expressed as
\[
\smash{
G_1= G_{Nd_{out}}F_{Nd_{out}} \circ G^{+} \circ G_{Nd_{in}} F_{Nd_{in}}
= K^{(L+1)}_{N} \circ \psi \circ K^{(0)}_{N}.
}
\]
\paragraph{\textit{Step 4: Employing quantitative universal approximation for MLPs}}
We denote by $\mathcal{NN}_{\Delta,k :p,m}^{\tanh}$ the class of MLP with $p$ input neurons, $m$ output neurons, an arbitrary number of hidden layers of depth $\Delta$, at-most $k$ neurons per hidden layer, and $\tanh$ activation (\cref{def:MLPs}).
Since $K$ is compact and $K^{(0)}_{N}$ is a finite rank operator, the image $K^{(0)}_{N}(K) \subset \mathbb{R}^{Nd_{in}}$ is compact.
Also,
$\psi : \mathbb{R}^{Nd_{in}} \to \mathbb{R}^{Nd_{out}}, $
is a continuous map between Euclidean spaces.
By \cite[Proposition 53]{kratsios2022universal}, there exist a depth $L$ and a network
$\psi_{NN} \in \mathcal{NN}_{L, Nd_{in}+Nd_{out} + 2:Nd_{in},Nd_{out}}^{\tanh}$ such that
\[
\sup_{\tilde{a} \in K^{(0)}_{N}(K)}\|\psi(\tilde{a}) - \psi_{NN}(\tilde{a})\|
\le
\tfrac{\varepsilon}{2(1/|D_2|^{1/2})}.
\]
The depth $L$ of $\psi_{NN}$ can be chosen of order
\begin{equation*}
\resizebox{1\linewidth}{!}{$
\mathcal{O}\left(
Nd_{out} (\mathrm{diam}[K^{(0)}_{N}(K)])^{Nd_{in}}
\left(\omega^{-1}(F_{Nd_{out}} \circ G^{+} \circ G_{Nd_{in}}, \tfrac{{\varepsilon}}{(1+\tfrac{Nd_{in}}{4})2(1/|D_2|^{1/2})Nd_{out}} )
\right)^{-2Nd_{in}}
\right).
$}
\end{equation*}
Since $\|K^{(0)}_{N}\|_{\mathrm{op}} \leq |D_2|^{1/2}$,
\[
\mathrm{diam}[K^{(0)}_{N}(K)] \leq |D_2|^{1/2}\operatorname{diam}_{L^2(D_1)^{d_{in}}}(K).
\]
Thus, we can estimate
the depth of $\psi_{NN}$
from above by
\begin{equation}\label{estimate-layer}
\resizebox{1\linewidth}{!}{$
\mathcal{O}\left(
Nd_{out} \left(  |D_2|^{1/2}\operatorname{diam}_{L^2(D_1)^{d_{in}}}(K)
\right)^{Nd_{in}}
\left(\omega^{-1}(F_{Nd_{out}} \circ G^{+} \circ G_{Nd_{in}}, \tfrac{{\varepsilon}}{(1+\tfrac{Nd_{in}}{4})2(1/|D_2|^{1/2})Nd_{out}} )
\right)^{-2Nd_{in}}
\right).
$}
\end{equation}
Hence, for $a \in K$
\begin{align}\label{est:UAT3}
\|G_1(a) - K^{(L+1)}_{N} \circ \psi_{NN} \circ K^{(0)}_{N}(a)\|
&
=
\|K^{(L+1)}_{N} \circ (\psi -  \psi_{NN}) \circ K^{(0)}_{N}(a) \|
\\
\nonumber
&
\le
\|K^{(L+1)}_N\|_{\mathrm{op}}
\sup_{\tilde{a} \in K^{(0)}_{N}(K)}\|\psi(\tilde{a}) - \psi_{NN}(\tilde{a})\|
\\
&
\nonumber
\le
(1/|D_2|^{1/2})
\tfrac{{\varepsilon}}{2(1/|D_2|^{1/2})}
\nonumber
= \tfrac{{\varepsilon}}{2}.
\end{align}
By the metric approximation property~\eqref{eq:MAP}, the maps $F_{Nd_{out}}$ and $G_{Nd_{in}}$ are $1$-Lipschitz. Hence the modulus of continuity of $F_{Nd_{out}} \circ G^{+} \circ G_{Nd_{in}}$ is bounded above by $\omega$, and~\eqref{estimate-layer} reduces to
\begin{equation}\label{estimate-layer__II}
\mathcal{O}\left(
Nd_{out} \left(  |D_2|^{1/2}\operatorname{diam}_{L^2(D_1)^{d_{in}}}(K)
\right)^{Nd_{in}}
\left(\omega^{-1}
    \biggl(
        \tfrac{{\varepsilon}}{(1+\tfrac{Nd_{in}}{4})2(1/|D_2|^{1/2})Nd_{out}}
    \biggr)
\right)^{-2Nd_{in}}
\right).
\end{equation}
\paragraph{\textit{Step 5: Putting it all together}}
Define $
G=K^{(L+1)}_{N} \circ \psi_{NN} \circ K^{(0)}_{N}$
which serves as an approximator of $G^{+}$.
By construction, with $W=Nd_{in}+Nd_{out}+2$ and zero coordinate-bias networks of depth at most one,
$
G \in \mathcal{NO}^{\tanh}_{N,W,L,1,\boldsymbol d}
$
with depth $L$ bounded as in \eqref{estimate-layer__II}.
Combining estimates (\ref{est:UAT1}) and (\ref{est:UAT3}), we have for $a \in K$
\[
\begin{aligned}
    \| G^{+}(a) - G(a) \|
\le
    \| G^{+}(a) - G_{1}(a) \|
    +
    \| G_{1}(a) - G(a) \|
\le
    \tfrac{{\varepsilon}}{2}+\tfrac{{\varepsilon}}{2}
=
    {\varepsilon}.
\end{aligned}
\]
It remains to make the choice of $N$ explicit. Since the scalar subspaces $V_{N,K}^{(1)}$ and $W_{N,K}^{(2)}$ were chosen in Step~1 and applied componentwise,
\begin{equation}
\label{eq:final_ck_bound}
C_K(N)
\lesssim
R_{K,\mathrm{in}}\,N^{-s_1/d_1},
\qquad
C_{G^{+}(K)}(N)
\lesssim
R_{K,\mathrm{out}}\,N^{-s_2/d_2}.
\end{equation}
Therefore, a sufficient condition for~\eqref{eq:C_estimates__earlyform} is
\[
R_{K,\mathrm{out}}\,N^{-s_2/d_2}\le \varepsilon/4
\qquad\text{and}\qquad
R_{K,\mathrm{in}}\,N^{-s_1/d_1}\le \omega^{-1}(\varepsilon/4).
\]
Equivalently, it suffices to take
\[
N
\in
\mathcal{O}\!\left(
\left(\frac{R_{K,\mathrm{in}}}{\omega^{-1}(\varepsilon)}\right)^{d_1/s_1}
\vee
\left(\frac{R_{K,\mathrm{out}}}{\varepsilon}\right)^{d_2/s_2}
\right).
\]
With this choice, the previous construction yields the desired neural operator approximation on $K$.

\subsection[Proof of the routing-tree lemma]{Proof of Lemma~\ref{lem:complete_n_ary_tree_counting}} \label{s:Proof__ss:TreeLemma}
Fix $\delta>0$, and consider a positive integer $k$, to be set later.
 \paragraph{    \textit{Step 1: Optimal covering of $K$ via metric entropy}}
We begin with a quantitative version of the Rellich-Kondrashov Theorem for $K$; for $k\in \mathbb{N}$, consider the $k$-entropy number of $K$,
\[
        e_k(K)
    \eqdef
        \inf
        \big\{\delta>0:\,
            \exists f_1,\dots,f_{2^k-1}\in L^2(D_1)
            \,
            \max_{f\in K}\,
                \min_{i=1,\dots,2^k-1}\,
                \|
                    f-f_i
                \|_{L^2(D_1)}
            <
                \delta
        \big\}
.
\]
In \cite{birman1967piecewise}, it was shown that $e_k(K)$ satisfies
\[
        k^{-s_1/d_1}
    \lesssim
        e_{k}(K)
    \lesssim
        k^{-s_1/d_1}.
\]
Thus, after increasing the constant if necessary, there exist $2^k-1$ functions $f_1,\dots,f_{2^k-1}\in K$ such that
\begin{equation}
\label{eq:covering_raw}
        \max_{f\in K}\, \min_{i=1,\dots, 2^k-1}\, \|f-f_i\|_{L^2(D_1)}
    <
        C\,
            k^{-s_1/d_1}
\end{equation}
for some absolute constant $C>0$.
\paragraph{ \textit{Step 2: Building an idealized $v$-ary tree with leaves containing $\{f_i\}_{i=1}^{2^k-1}$}}
We recall that a complete $v$-ary tree of height $h$ has leaves $\Lambda$, and total vertices (nodes) $V$ given by
$
        \Lambda = v^h
    \mbox{ and }
       V = \frac{v^{h+1}-1}{v-1}
.
$
We take $h$ to be the least integer for which $\Lambda=v^h\ge 2^k-1$.
It yields
\allowdisplaybreaks
\begin{align}
\label{eq:identifying_height__UB}
        h
    =
        \big\lceil
            \log_v\big(
                  2^k -1
                \big)
        \big\rceil
    =
        \big\lceil
            \log_v\big(
                  2^{\lceil \delta^{-d_1/s_1}\rceil} -1
                \big)
        \big\rceil
    ,
\end{align}
where the ceiling ensures that $h$ is an integer.

We now construct an \emph{idealized} $v$-ary tree defining the decentralized NO using backwards recursion. For each $i=1,\dots,2^k-1$, relabel $f_{h:i}\eqdef f_i$, and for the remaining indices $i=2^k,\dots,v^h$ set $f_{h:i}\eqdef f_1$. Define $\tilde{V}_h\eqdef \{f_{h:i}\}_{i=1}^{v^h}$ and $\tilde{E}_h\eqdef \emptyset$. For $\tilde{h}=h-1,h-2,\dots,0$, define the nonlinear functional $\ell_{\tilde{h}}: \, L^2(D_1)^{v^{\tilde{h}}}  \rightarrow [0,\infty)$ by
\begin{equation}
\label{eq:NN_Energy}
\smash{
    (f_1,\dots,f_{v^{\tilde{h}}})  \xrightarrow{\ell_{\tilde{h}}}\sum_{k=1}^{v^{\tilde{h}}}\, \sum_{j=1}^{v^{\tilde{h}+1}}\, \|f_k-f_{\tilde{h}+1:j}\|_{L^2(D_1)}.
}
\end{equation}
Since the norm $\|\cdot\|_{L^2(D_1)}$ is continuous in $L^2(D_1)$, $\ell_{\tilde{h}}$ is continuous on $L^2(D_1)^{v^{\tilde{h}}}$. As $K$ is compact in $L^2(D_1)$, $K^{v^{\tilde{h}}}$ is also compact in $L^2(D_1)^{v^{\tilde{h}}}$ with its product topology. Therefore, there exists a minimizer of $\ell_{\tilde{h}}$ on $K^{v^{\tilde{h}}}$; pick one such minimizing family $\{f_{\tilde{h}:k}\}_{k=1}^{v^{\tilde{h}}}$.
Define
\[
\smash{
    \tilde{V}_{\tilde{h}} \eqdef \{f_{\tilde{h}:k}\}_{k=1}^{v^{\tilde{h}}} \cup \tilde{V}_{\tilde{h}+1}
}
\]
For each $j=1,\dots,v^{\tilde{h}+1}$, choose an index (possibly not unique) $i^{\star:j}\in \{1,\dots,v^{\tilde{h}}\}$ such that
\[
\smash{
        \|f_{\tilde{h}:i^{\star:j}}-f_{\tilde{h}+1:j}\|_{L^2(D_1)}
    =
    \min_{i=1,\dots,v^{\tilde{h}}}
        \,
        \|f_{\tilde{h}:i}-f_{\tilde{h}+1:j}\|_{L^2(D_1)}
.
}
\]
Define the updated edge set as
$\tilde{E}_{\tilde{h}}
    \eqdef
            \Big\{
                \{
                    f_{\tilde{h}:i^{\star:j}}
                    ,
                    f_{\tilde{h}+1:j}
                :
                \,
                j=1,\dots,v^{\tilde{h}+1}
                \}
            \Big\}
        \bigcup
            \tilde{E}_{\tilde{h}+1}$.
Finally, define the complete idealized tree as $\tilde{\mathcal{T}}\eqdef (\tilde{V}_0,\tilde{E}_0)$.

\paragraph{\textit{Step 3: Implementing the nodes and edges in the idealized trees using MLPs}}
Retroactively set $k\eqdef \lceil \delta^{-d_1/s_1}\rceil$. Since $s_1>d_1$ by assumption, Sobolev embedding gives continuous representatives for the finitely many centers in $\tilde V_0$. ReLU MLPs are dense in $C([0,1]^{d_1})$, and the router proof does not require a size bound for these MLPs. Thus, after enlarging the constant $C$ from~\eqref{eq:covering_raw} if needed, for each center $f\in\tilde V_0$ we may choose a ReLU MLP $\hat f$ satisfying
\begin{equation}
\label{eq:quantization}
        \max_{f\in \tilde V_0}\,
            \|f-\hat{f}\|_{L^{\infty}( [0,1]^{d_1} )}
    \le
        C\, \delta
\end{equation}.
Since $\|\cdot\|_{L^2([0,1]^{d_1})}\le \|\cdot\|_{L^{\infty}([0,1]^{d_1})}$, then~\eqref{eq:covering_raw} and~\eqref{eq:quantization} imply that
\begin{equation*}
\begin{aligned}
        \max_{f\in K}\, \min_{i=1,\dots,
        {2^{\lceil \delta^{-d_1/s_1}\rceil} -1 }
        }\, \|f-\hat{f}_i\|_{L^2( [0,1]^{d_1} )}
    \le &
        \max_{f\in K}\, \min_{i=1,\dots,
        {2^{\lceil \delta^{-d_1/s_1}\rceil} -1 }
        }\,
        \|f-f_i\|_{L^2( [0,1]^{d_1} )}
\\
+
&
\|f_i-\hat{f}_i\|_{L^2( [0,1]^{d_1} )}
    \le
        C\,k^{-s_1/{d_1}}
\\
+
&
        \min_{i=1,\dots,
        {2^{\lceil \delta^{-d_1/s_1}\rceil} -1 }
        }\,
        \|f_i-\hat{f}_i\|_{L^2( [0,1]^{d_1} )}
\\
	    <
	&
	        2\,
	        C
	        \, k^{-s_1/{d_1}}
	    \le
	    C_R\, \delta
	,
\end{aligned}
\end{equation*}
where we have defined $C_R \eqdef 2C$.
We update the idealized tree $\tilde{\mathcal{T}}$ by defining $\tilde{\mathcal{T}}\eqdef (V,E)$ where
$
        V
    \eqdef
        \{
	            \hat{f}:\,f\in \tilde V_0
        \}
\mbox{ and }
        E
    \eqdef
        \{
	            \{\hat{f}_i,\hat{f}_j\}:\, (f_i,f_j)\in \tilde E_0
        \}
.
$
\paragraph{\textit{Step 4: Verifying the approximate $K$-means relations}}
From~\eqref{eq:quantization}, for each $\tilde{h}\in \{0,\dots,h-1\}$, set
\[
S_{\tilde h}\eqdef
\sum_{k=1}^{v^{\tilde{h}}}
\sum_{j=1}^{v^{\tilde{h}+1}}
\|\hat{f}_{\tilde{h}:k}-\hat{f}_{\tilde{h}+1:j}\|_{L^2(D_1)}.
\]
Then
\allowdisplaybreaks
\begin{align*}
S_{\tilde h}
\le &
    \sum_{k=1}^{v^{\tilde{h}}}\,
    \sum_{j=1}^{v^{\tilde{h}+1}}\,
        \Big(
            \|\hat{f}_{\tilde{h}:k}-f_{\tilde{h}:k}\|_{L^2(D_1)}
        +
            \|f_{\tilde{h}:k}-f_{\tilde{h}+1:j}\|_{L^2(D_1)}
        +
            \|f_{\tilde{h}+1:j}-\hat{f}_{\tilde{h}+1:j}\|_{L^2(D_1)}
        \Big)
\\
\nonumber
\le &
    \sum_{k=1}^{v^{\tilde{h}}}\,
    \sum_{j=1}^{v^{\tilde{h}+1}}\,
        \Big(
            C_R\delta
        +
            \|f_{\tilde{h}:k}-f_{\tilde{h}+1:j}\|_{L^2(D_1)}
        +
            C_R\delta
        \Big)
=
    2C_R\delta
    \,
    v^{2\tilde{h}+1}
\\
\nonumber
&
+
    \sum_{k=1}^{v^{\tilde{h}}}\,
    \sum_{j=1}^{v^{\tilde{h}+1}}\,
        \|f_{\tilde{h}:k}-f_{\tilde{h}+1:j}\|_{L^2(D_1)}
\\
\le
&\quad
2C_R\delta\,v^{2\tilde{h}+1}
+
\min_{\substack{f_1,\dots,f_{v^{\tilde{h}}}\in K}}
\sum_{k=1}^{v^{\tilde{h}}}
\sum_{j=1}^{v^{\tilde{h}+1}}
\|f_{\tilde{h}:k}-f_{\tilde{h}+1:j}\|_{L^2(D_1)}
\end{align*}
where we have used the definition of $\{f_{\tilde{h}:k}\}_{k=1}^{v^{\tilde{h}}}$ as a minimizing family for the nearest-neighbor energy functional $\ell_{\tilde{h}}$, as defined in~\eqref{eq:NN_Energy}. This completes our proof.

\subsection[Proof of the domain-extension proposition]{Proof of Proposition~\ref{prop:extension_lemma}}\label{proof:extension_lemma}
Since $H^{s_1}(D_1)$ is dense in $L^2(D_1)$ and $L^2(D_2)$ is complete, the uniformly continuous map $G^+$ has a unique continuous extension $\tilde G:L^2(D_1)\to L^2(D_2)$ with the same modulus of continuity. Indeed, for $u\in L^2(D_1)$ choose $u_n\in H^{s_1}(D_1)$ with $u_n\to u$ in $L^2(D_1)$ and set $\tilde G(u)=\lim_n G^+(u_n)$; uniform continuity makes this limit independent of the approximating sequence and preserves the modulus.
For $i=1,2$, consider the \textit{restriction} $\rho_{(i)}: L^2([0,1]^{d_i})\to L^2(D_i)$ and \textit{extension by zero} $E^{(i)}:L^2(D_i)\to L^2([0,1]^{d_i})$ operators defined by
\begin{equation}
\begin{aligned}
\rho_{(i)}(u) & \eqdef u|_{D_i}
\quad\mbox{ and }\quad
E^{(i)}(u)  \eqdef u \boldsymbol{1}_{D_i}.
\end{aligned}
\end{equation}
Define
\[
\bar G \eqdef E^{(2)}\circ \tilde G\circ \rho_{(1)}:
L^2([0,1]^{d_1})\to L^2([0,1]^{d_2}).
\]
Then $\bar{G}$ is uniformly continuous with modulus of continuity $\omega$. Moreover, by construction, for each $u\in H^{s_1}(D_1)$
\allowdisplaybreaks
\begin{align}
\label{eq:extension_yoga2}
\bar{G}(\bar{u})|_{D_2}
\eqdef
\bar{G}(E^{(1)}(u))|_{D_2}
=
\rho_{(2)}\circ E^{(2)}\circ \tilde{G}\circ \rho_{(1)}\circ E^{(1)}(u)
=
\tilde{G}(u)
=
G^+(u).
\end{align}

The right-hand side of \eqref{eq:extension_yoga2} holds since $\tilde{G}$ extends $G^+$, and the left-hand side uses $E^{(1)}(u)=\bar{u}$.

Now, observe that, by construction, for $i=1,2$, $\rho_{(i)}\circ E^{(i)}$ is the identity on $L^2(D_i)$.
Moreover, both maps are $1$-Lipschitz since: for each $i=1,2$, every $u\in L^2(D_i)$, and each $v\in L^2([0,1]^{d_i})$
\allowdisplaybreaks
\begin{align*}
\|E^{(i)}(u)\|_{L^2([0,1]^{d_i})}^2
&=
\int_{x\in D_i}\|u(x)\|^2 dx
+
\int_{x\in [0,1]^{d_i}\setminus D_i} 0\, dx
\\
&=
\int_{x\in D_i}\|u(x)\|^2 dx
=
\|u\|_{L^2(D_i)}^2
\\
\|\rho_{(i)}(v)\|_{L^2(D_i)}^2
&=
\int_{x\in D_i} \|v(x)\|^2 dx
\\
&\le
\int_{x\in D_i} \|v(x)\|^2 dx
+
\int_{x\in [0,1]^{d_i}\setminus D_i} \|v(x)\|^2 dx
\\
&=
\|v\|_{L^2([0,1]^{d_i})}^2
.
\end{align*}

\subsection[Proof of the main result]{Proof of Theorem~\ref{thrm:Main}} \label{s:Proofs__ss:MainResult}
\paragraph{\textit{Step 1: Choice of the expert scale and of the tree resolution}}
Let
\[
R_{G,K}
\eqdef
\sup_{u\in K}\|G^+(u)\|_{H^{s_2}(D_2)}
<
\infty.
\]
For every local subset of $K$, Proposition~\ref{prop:quantitative_UAT__ON} can therefore be applied with input radius bounded by $R$ and output radius bounded by $R_{G,K}$. Fix
\[
N
\in
\mathcal{O}\!\left(
[\omega^{-1}(\varepsilon)]^{-d_1/s_1}
\vee
\varepsilon^{-d_2/s_2}
\right),
\]
with hidden constants depending on $R$ and $R_{G,K}$.

Let $C_R>0$ be the constant of Lemma~\ref{lem:complete_n_ary_tree_counting}, and define
\begin{equation}
\label{eq:R_Def__TReeMode}
\rho_\varepsilon
\eqdef
\omega^{-1}\!\left(
\tfrac{\varepsilon\,|D_2|^{1/2}}{(2+N\,d_{in}/2)N d_{out}}
\right),
\qquad
\delta_\varepsilon
\eqdef
\tfrac{\rho_\varepsilon^2}{4C_R}.
\end{equation}
By Lemma~\ref{lem:complete_n_ary_tree_counting}, there exists a routing tree $\mathcal T$ satisfying the approximate hierarchical relation~\eqref{eq:K_Means__Recursive} and the covering estimate
\begin{equation}
\label{eq:L2_diam_bound}
\max_{x\in K}\,
\min_{i=1,\dots,\Lambda}
\|x-\hat f_i\|_{L^2([0,1]^{d_1})}
<
C_R\delta_\varepsilon
=
\rho_\varepsilon^2/4.
\end{equation}
Its height is
\[
h
=
\left\lceil
\log_v\!\bigl(2^{\lceil \delta_\varepsilon^{-d_1/s_1}\rceil}-1\bigr)
\right\rceil,
\]
so the expert count is $\Lambda=v^h$.

\paragraph{\textit{Step 2: Local neural-operator approximation}}
For each leaf $i=1,\dots,\Lambda$, define the local subset
\[
K_i
\eqdef
\left\{
u\in K:\,
\|u-\hat f_i\|_{L^2(D_1)}\le \rho_\varepsilon^2/4
\right\}.
\]
By~\eqref{eq:L2_diam_bound}, every $u\in K$ belongs to at least one such set, and each $K_i$ has
\[
\operatorname{diam}_{L^2(D_1)}(K_i)\le \rho_\varepsilon^2/2.
\]
Applying Proposition~\ref{prop:quantitative_UAT__ON} to each $K_i$ yields an expert $\hat G_i$ satisfying
\[
\sup_{u\in K_i}\|G^+(u)-\hat G_i(u)\|_{L^2(D_2)}<\varepsilon.
\]
Moreover, each expert can be chosen with
\[
W_i
\in
\mathcal{O}(N),
\qquad
\operatorname{rank}(\hat G_i)
\in
\mathcal{O}(N),
\]
and, by the depth estimate in \Cref{prop:quantitative_UAT__ON},
\[
L_i
\in
\mathcal{O}\!\left(
N d_{out}
\left(
\frac{|D_2|^{1/2}\operatorname{diam}_{L^2(D_1)}(K_i)}{\rho_\varepsilon^2}
\right)^{N d_{in}}
\right)
\subseteq
\mathcal{O}(N),
\]
because $|D_2|=1$ and $\operatorname{diam}_{L^2(D_1)}(K_i)\le \rho_\varepsilon^2/2$.

\paragraph{\textit{Step 3: Active complexity}}
Let $\mathcal{NO}=\{\hat G_i\}_{i=1}^\Lambda$. The realization of $(\mathcal T,\mathcal{NO})$ uses the nearest-leaf rule in \Cref{alg:realization_mix}. Hence, by the covering estimate~\eqref{eq:L2_diam_bound}, each $u\in K$ is routed to a leaf $i$ with $u\in K_i$ and then evaluates $\hat G_i(u)$. The resulting MoNO approximates $G^+$ uniformly on $K$ with error at most $\varepsilon$.

Since each expert has width, rank, and depth of order $N$, while the bias networks can be taken to be zero in the construction of Proposition~\ref{prop:quantitative_UAT__ON}, the active parameter count satisfies
\[
\operatorname{Activ-Cpl}(\mathcal T,\mathcal{NO})
=
\max_{i=1,\dots,\Lambda} P(\hat G_i)
\in
\mathcal{O}(N^4).
\]
This proves the theorem.

\section{Additional Material}
\label{app:additional-material}

This appendix collects supplemental material that would otherwise interrupt the main narrative: the routing pseudocode used by MoNOs, the auxiliary domain-extension result, additional controlled-experiment diagnostics, an inverse-problem case study that motivates the distributed construction, and the explicit identification of a classical neural operator as a one-expert MoNO.

\paragraph{Routing pseudocode.}
For reference, the routing rule used in the main text can be written as the following nearest-leaf search over tree-indexed centers.

\begin{algorithm}[H]
\caption{Realization of a mixture of neural operators}
\label{alg:realization_mix}
\begin{algorithmic}[1]
\STATE \textbf{Input:} $u \in H^{s_1}(D_1)\subset L^2(D_1)$
\STATE Evaluate the $\Lambda$ leaf-center distances $\|\hat f_{h:i}-u\|_{L^2(D_1)}$
\STATE $\displaystyle i^\star=\argmin_{i\in\{1,\dots,\Lambda\}}\|\hat f_{h:i}-u\|_{L^2(D_1)}$
\STATE Follow the unique root-to-leaf path in $\mathcal T$ ending at leaf $i^\star$
\STATE \textbf{Output:} $G_{h,i^\star}(u)$
\end{algorithmic}
\end{algorithm}

\section{Routing Tree Lemma}
\label{app:routing-tree-lemma}

\begin{lemma}[Approximate hierarchical $K$-means in function space]
\label{lem:complete_n_ary_tree_counting}
Fix $s_1,R>0$ and $d_1\in\N$ with $s_1>d_1$. Let $K$ be the compact set in \eqref{eq:compact_set_Sobolev_type}. Then there exists a constant $C_R>0$ such that, for every valency $v\in\N_{\ge 2}$ and radius $\delta>0$, there is a $v$-ary tree $\mathcal T=(V,E)$ of height $h$ whose nodes are ReLU MLPs $\{\hat f\}_{\hat f\in V}$ and whose leaves satisfy
\begin{equation}
\label{eq:packing_condition}
\max_{x\in K}
\min_{i=1,\dots,\Lambda}
\|x-\hat f_i\|_{L^2([0,1]^{d_1})}
<
C_R\delta.
\end{equation}
Moreover, for each $\tilde h=0,\dots,h-1$, the tree obeys the approximate hierarchical $k$-means condition
\begin{equation}
\label{eq:K_Means__Recursive}
\sum_{k=1}^{v^{\tilde h}}
\sum_{j=1}^{v^{\tilde h+1}}
\|\hat f_{\tilde h:k}-\hat f_{\tilde h+1:j}\|_{L^2(D_1)}
\le
2C_R\delta\,v^{2\tilde h+1}
+
\min_{f_1,\dots,f_{v^{\tilde h}}\in K}
\sum_{k=1}^{v^{\tilde h}}
\sum_{j=1}^{v^{\tilde h+1}}
\|f_{\tilde h:k}-f_{\tilde h+1:j}\|_{L^2(D_1)}.
\end{equation}
Its leaf count, height, and total number of nodes are given by the bounds recorded in \Cref{s:Proof__ss:TreeLemma}.
\end{lemma}

\section{Domain Extension Auxiliary Result}
\label{app:domain-extension}

\begin{proposition}[Domain extension]
\label{prop:extension_lemma}
Assume the setting of \Cref{def:neural-operator-v2} with $d_{in}=d_{out}=1$. Let
\[
G^+:\bigl(H^{s_1}(D_1),\|\cdot\|_{L^2(D_1)}\bigr)
\to
\bigl(H^{s_2}(D_2),\|\cdot\|_{L^2(D_2)}\bigr)
\]
be uniformly continuous with modulus $\omega$, and suppose each $D_i$ is a compact subset of $[0,1]^{d_i}$ of positive Lebesgue measure. Then there exists a uniformly continuous operator
\[
\bar G:L^2([0,1]^{d_1})\to L^2([0,1]^{d_2})
\]
with the same modulus of continuity such that $\bar G(\bar u)|_{D_2}=G^+(u)$ whenever $\bar u|_{D_1}=u$.
\end{proposition}

\section{Additional Experimental Diagnostics}
\label{app:experiment_diagnostics}

This appendix records the protocol and exact parameter accounting for the Burgers experiments in \Cref{sec:controlled_experiment}, together with the older soft-mixture mechanism check used during development. These diagnostics support the controlled empirical checks in the main text.

\paragraph{Burgers protocol.}
For each seed, we generate $6144$ periodic initial conditions on a grid of size $64$ and split them into $4096/1024/1024$ train/validation/test samples. The reported numbers use seeds $7,17,27$. The route label $r(u_0)\in\{0,1,2,3\}$ is the empirical quartile of $\|\partial_xu_0\|_\infty$ in the generated input pool. In the homogeneous experiment, every sample is evolved with viscosity $\nu=0.01$; the same quartile labels are kept only to test whether a routed model helps when the PDE operator itself is not regime-dependent. In the heterogeneous experiment, samples in the four quartiles are evolved with viscosities $(0.05,0.02,0.008,0.003)$. All targets are produced by the explicit finite-volume Burgers solver used in the released script. Inputs and outputs are normalized using statistics from the training split. All models are trained for $40$ epochs with Adam, learning rate $10^{-3}$, batch size $128$, and validation-based checkpoint selection on a single NVIDIA RTX PRO 6000 Blackwell GPU.

\paragraph{Model rows.}
The active-matched global NO is a single Fourier-style neural operator with width $16$, $10$ Fourier modes, and $4$ spectral blocks. The total-matched global NO uses the same architecture with width $33$, chosen to match the stored parameter count of the learned-router MoNO. The oracle MoNO consists of four width-$16$ experts and no learned router; both training and evaluation dispatch sample $u_0$ to expert $r(u_0)$. The learned-router MoNO uses the same four experts plus a width-$64$ feature router. During training, the experts are still updated under the oracle assignment $r(u_0)$, while the router is trained with cross-entropy to predict $r(u_0)$. Its feature vector contains the input mean, standard deviation, root mean square, maximum, minimum, mean absolute periodic finite difference, maximum absolute periodic finite difference, and the first ten nonzero real-FFT magnitudes of the normalized input. At test time, the learned router selects $\argmax_i q_\theta(i\mid u_0)$ and only the selected expert is evaluated. Thus the learned-router row is a supervised-routing diagnostic. Active-including-router counts one expert plus the router; total stored parameters count all experts plus the router.

\begin{table*}[t]
\ra{1.15}
\centering
\begin{adjustbox}{max width=0.9\textwidth,center}
\begin{tabular}{@{}llrrrr@{}}
\cmidrule[0.3ex](){1-6}
\textbf{Case} & \textbf{Model} & \textbf{Active expert} & \textbf{Router} & \textbf{Total stored} & \textbf{Rel.\ $L^2$} \\
\midrule
Homogeneous & Global NO (active-matched) & $21{,}905$ & $0$ & $21{,}905$ & $0.0203 \pm 0.0004$ \\
 & Global NO (total-matched) & $92{,}863$ & $0$ & $92{,}863$ & $0.0119 \pm 0.0006$ \\
 & Oracle MoNO & $21{,}905$ & $0$ & $87{,}620$ & $0.0223 \pm 0.0007$ \\
 & Learned-router MoNO & $21{,}905$ & $5{,}572$ & $93{,}192$ & $0.0213 \pm 0.0013$ \\
\midrule
Heterogeneous & Global NO (active-matched) & $21{,}905$ & $0$ & $21{,}905$ & $0.1140 \pm 0.0056$ \\
 & Global NO (total-matched) & $92{,}863$ & $0$ & $92{,}863$ & $0.0986 \pm 0.0019$ \\
 & Oracle MoNO & $21{,}905$ & $0$ & $87{,}620$ & $0.0245 \pm 0.0007$ \\
 & Learned-router MoNO & $21{,}905$ & $5{,}572$ & $93{,}192$ & $0.0558 \pm 0.0033$ \\
\cmidrule[0.3ex](){1-6}
\end{tabular}
\end{adjustbox}
\caption{Exact parameter accounting and relative $L^2$ test error for the two Burgers cases. The homogeneous case uses fixed viscosity $\nu=0.01$. The heterogeneous case assigns viscosities $(0.05,0.02,0.008,0.003)$ by initial-condition steepness quartile. The learned-router rows report a larger router than the soft-mixture diagnostic because this router uses input statistics, spectral magnitudes, and gradient features.}
\label{tab:burgers_active_complexity}
\end{table*}

\paragraph{Soft-mixture diagnostic.}
The following table and figure record the earlier controlled soft-mixture operator-learning diagnostic. Its target is itself a soft mixture, so it is best read as a mechanism check; the Burgers experiment in the main text is the less synthetic empirical illustration.

\begin{table*}[t]
\ra{1.15}
\centering
\begin{adjustbox}{max width=0.9\textwidth,center}
\begin{tabular}{@{}lrrrrr@{}}
\cmidrule[0.3ex](){1-6}
\textbf{Model} & \textbf{Active expert} & \textbf{Router} & \textbf{Active incl.\ router} & \textbf{Total stored} & \textbf{Test MSE} \\
\midrule
Global NO (active-matched) & $21{,}905$ & $0$ & $21{,}905$ & $21{,}905$ & $0.000785 \pm 0.000102$ \\
Global NO (total-matched) & $87{,}329$ & $0$ & $87{,}329$ & $87{,}329$ & $0.000456 \pm 0.000020$ \\
Oracle MoNO & $21{,}905$ & $0$ & $21{,}905$ & $87{,}620$ & $0.000224 \pm 0.000004$ \\
Learned-router MoNO & $21{,}905$ & $2{,}212$ & $24{,}117$ & $89{,}832$ & $0.000234 \pm 0.000004$ \\
\cmidrule[0.3ex](){1-6}
\end{tabular}
\end{adjustbox}
\caption{Exact parameter accounting and test error for the controlled neural-operator illustration. The learned-router row reports both the active expert size and the slightly larger active count obtained when the router itself is included. The qualitative message is the same as in the theory: better approximation with a smaller active model can come at the price of a larger stored ensemble.}
\label{tab:soft_mixture_active_complexity}
\end{table*}

\begin{figure*}[t]
\centering
\includegraphics[width=\textwidth]{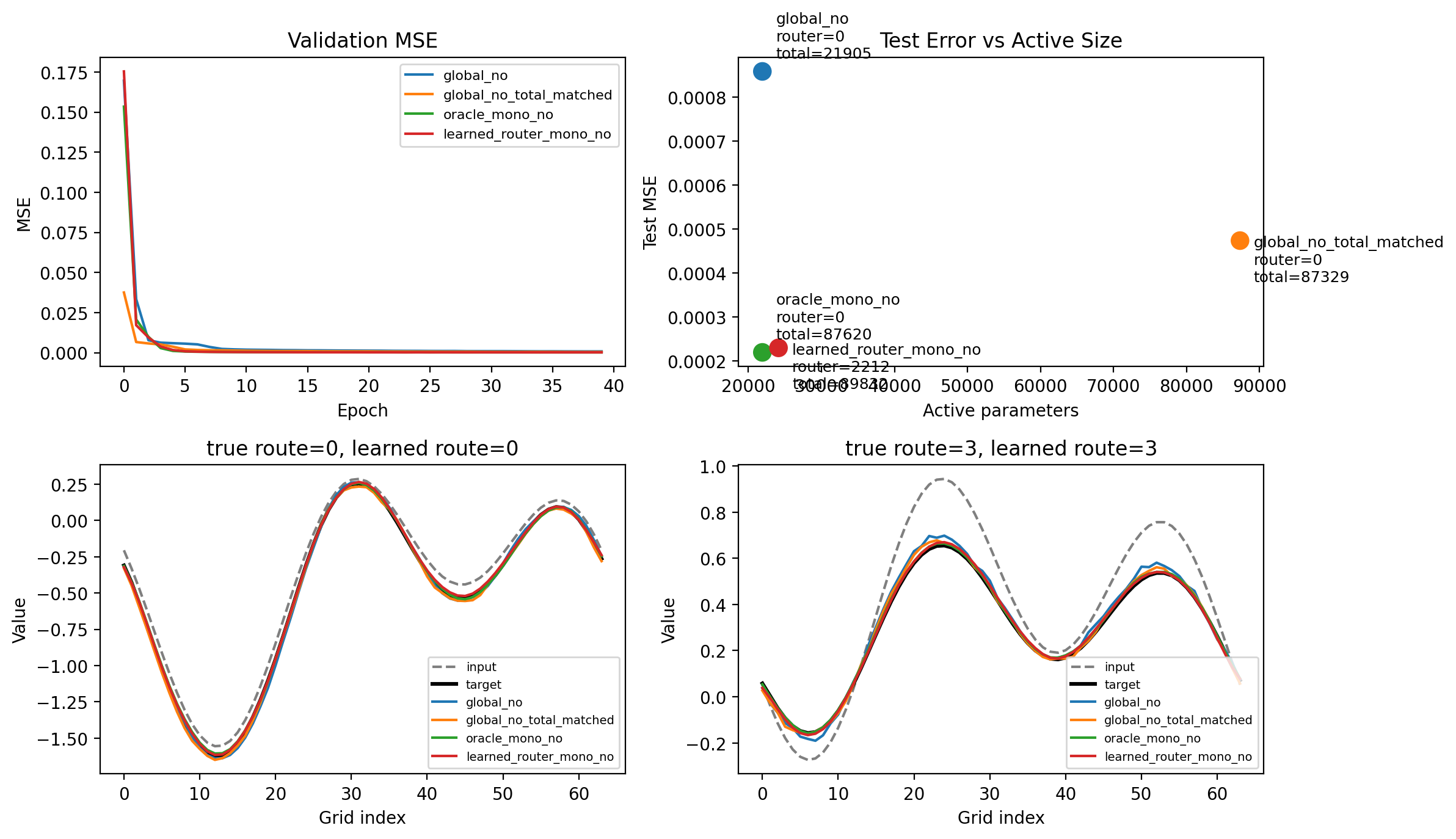}
\caption{\textbf{Representative diagnostics for the controlled neural-operator illustration.} Top left: validation MSE across epochs for one run with seed $7$ at the same $40$-epoch training budget used in the main text. Top right: the same active-versus-total comparison used in the main text, now shown for this single run. Bottom row: two representative test examples, with the input function shown as a dashed black curve and the target operator output shown in solid black. The routed MoNO predictions track the target more closely than the active-matched global neural operator on these examples, while the learned router selects the same route as the oracle on the displayed samples.}
\label{fig:soft_mixture_reconstructions_appendix}
\end{figure*}

\paragraph{Training-budget note.}
For the soft-mixture diagnostic, we also reran the same three-seed experiment for $100$ epochs, keeping the architecture, synthetic operator, and parameter accounting fixed. In that longer-budget rerun, the active-matched global NO attains test MSE $0.000360\pm0.000024$, the total-matched global NO attains $0.000158\pm0.000023$, the oracle MoNO attains $0.000191\pm0.000006$, and the learned-router MoNO attains $0.000179\pm0.000004$ with routing accuracy $0.993\pm0.001$. Thus, the same fixed-active-size conclusion remains visible under longer training: routed models still outperform the active-matched global baseline. At the same time, a larger-active total-matched global baseline can improve enough to outperform the routed models, which is fully consistent with the paper's claim, since the theory concerns active complexity rather than universal empirical dominance over larger-active models.

\section{Applications to Inverse Problems}\label{a:Inverseproblems}

We include an inverse-problem case study illustrating why localization can matter when the target operator has only weak stability.

Let $\Omega\subset \mathbb R^2$ be an open bounded connected set with smooth boundary $\partial \Omega$.
Consider the inverse problem of corrosion detection in electrostatics.
We model the conductor by the domain $\Omega$ and assume that it has an inaccessible
part of the boundary, denoted by $\Gamma\subset \partial \Omega$, that is affected by corrosion.
We model this by the equation
\begin{eqnarray} \label{ES problem1}
 & &\Delta u(x)=0,\quad\ \hbox{in }x\in \Omega,\\ \label{ES problem2}
& &\p_\nu u+qu=0,\quad\ \hbox{on }x\in \Gamma,\\ \label{ES problem3}
& &\p_\nu u=f,\quad\ \hbox{on }x\in\partial \Omega\setminus \Gamma.
\end{eqnarray} 
We denote the solution of \eqref{ES problem1}-\eqref{ES problem3} by $u^q=u$.
Here, $u=u(x)$, $u\in H^1(\Omega)$ is the electric voltage at the point $x\in \Omega$,
$\nu$ is the exterior unit normal vector of $\partial\Omega,$ $\p_\nu u$
is the normal derivative of $u$ on the boundary, the Robin coefficient $q(x)$ models
the corrosion (causing contact impedance on the boundary), and $f(x)$ is the 
external normal current through the boundary.

Assume also that we are given a function $q_0$ that is close to the true
Robin coefficient and that the solution of the equation
\begin{eqnarray} \label{ES problem 0}
 & &\Delta u_0(x)=0,\quad\ \hbox{in }x\in \Omega,\\
& &\p_\nu u_0+q_0u_0=0,\quad\ \hbox{on }x\in \Gamma,\\
& &\p_\nu u_0=f,\quad\ \hbox{on }x\in \partial \Omega\setminus \Gamma
\end{eqnarray} 
satisfies 
\begin{eqnarray} \label{ES problem A}
|u_0(x)|\ge c_0>0\quad \hbox{on $\overline \Gamma$}.
\end{eqnarray}
Note that a condition analogous to \eqref{ES problem A} is necessary to determine the coefficient $q_0$. For example, if $u_0|_\Gamma$ vanishes, then $q_0$ does not influence the data $u_0|_\Sigma$. 

When $q$ is close to $q_0$, the inverse problem is to determine $q$ from
the measurement $g^q:=u^q|_\Sigma$, where $\Sigma\subset \partial \Omega\setminus \overline \Gamma$
is an open, non-empty set. 
To study this inverse problem, we assume that $q_0$ is not identically zero, and that
the true Robin coefficient $q$ and the boundary current $f$ satisfy
\begin{eqnarray} \label{ES problem assumptions1}
 & &q\in H^2_0(\Gamma),\quad q\ge 0,\\
  \label{ES problem assumptions2}
& &f\in C^{1,\alpha}(\p\Omega),\ 0<\alpha<1,\\
 \label{ES problem assumptions3}
& &\|q-q_0\|_{H^2(\p\Omega)}<\e_0,
\end{eqnarray} 
and $H^2_0(\Gamma)$ is the closure of $C^\infty_0(\Gamma)$ in the Sobolev space $H^2(\partial\Omega)$,
while $C^\infty_0(\Gamma)$ is the set of smooth, compactly supported functions on $\Gamma$.
The solution map of the inverse problem is
\begin{eqnarray} \label{ES IP solution map pre}
& &\mathcal F_0:u^q|_\Sigma\mapsto q|_\Gamma.
\end{eqnarray} 

\begin{proposition}\label{Prof Robin IP}
When $\e_0>0$ is small enough,
the solution map \eqref{ES IP solution map pre} admits a continuous nonlinear extension
\begin{eqnarray} \label{ES IP solution map}
 & &\mathcal F:L^2(\Sigma)\to L^2(\Gamma),\\
& &\mathcal F(u^q|_\Sigma)= q|_\Gamma
\end{eqnarray} 
that satisfies
\begin{eqnarray} \label{ES IP estimate}
 & &\|\mathcal F(g_1)-\mathcal F(g_2)\|_{L^2(\Gamma)}\leq \omega(\|g_1-g_2\|_{L^2(\Sigma)}),
 \end{eqnarray}
where 
 \begin{eqnarray}   \label{stability omega}
  \omega(t)=C_0\bigg(\log(1+\frac {C_1}t)\bigg)^{-1}
\end{eqnarray} 
for some constants $C_0,C_1$ depending on $\Omega,\Gamma,\Sigma$, $f$, and $q_0$. 
\end{proposition}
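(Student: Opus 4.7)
The plan is to reduce the estimate \eqref{ES IP estimate} to a classical logarithmic stability result for the Cauchy problem for harmonic functions, and then extend the resulting uniformly continuous map from the range of the forward operator to all of $L^2(\Sigma)$ using the Benyamini--Lindenstrauss extension theorem already invoked in the main text. First I would verify that for $\e_0$ sufficiently small, any admissible $q$ satisfying \eqref{ES problem assumptions1}--\eqref{ES problem assumptions3} produces a unique solution $u^q \in H^1(\Omega)$ of the mixed boundary value problem \eqref{ES problem1}--\eqref{ES problem3}, and that elliptic regularity together with the continuous dependence $q\mapsto u^q$ implies that $u^q$ stays close (in $C^1(\overline\Gamma)$, via Sobolev embedding, since $q\in H^2_0(\Gamma)$ and $f\in C^{1,\alpha}$) to $u_0$. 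Combined with the assumption \eqref{ES problem A}, this yields a lower bound $|u^q(x)|\ge c_0/2$ on $\overline\Gamma$, so that the Robin boundary condition \eqref{ES problem2} can be solved for $q$:
\[
q(x) = -\frac{\p_\nu u^q(x)}{u^q(x)},\qquad x\in\Gamma.
\]

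Next, given two admissible coefficients $q_1,q_2$ with associated solutions $u_1 = u^{q_1}$, $u_2=u^{q_2}$, set $v \eqdef u_1 - u_2$. Then $v$ is harmonic in $\Omega$, satisfies the homogeneous Neumann condition $\p_\nu v = 0$ on $\p\Omega\setminus\Gamma$, and on $\Gamma$ obeys $\p_\nu v + q_1 v = (q_2 - q_1)u_2$. Using the pointwise lower bound on $u_1$, we can express
\[
q_1 - q_2 = \frac{1}{u_2}\bigl(\p_\nu v + q_1 v\bigr)\quad\text{on }\Gamma,
\]
so $\|q_1-q_2\|_{L^2(\Gamma)}$ is controlled by $\|v\|_{H^1(\Gamma)} + \|\p_\nu v\|_{L^2(\Gamma)}$ (plus an $L^\infty$-factor from $1/u_2$ and $q_1$). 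The Cauchy data for $v$ on the accessible part $\Sigma$ are $v|_\Sigma = g_1 - g_2$ and $\p_\nu v|_\Sigma = 0$, where the latter follows from $\p_\nu u_i = f$ on $\Sigma\subset \p\Omega\setminus\overline\Gamma$.

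The crux is then the Cauchy problem for the Laplacian: estimating a harmonic function and its normal derivative on $\Gamma$ in terms of its Cauchy data on a disjoint part $\Sigma$ of the boundary. This is a classical ill-posed problem admitting only logarithmic stability; I would invoke the three-spheres (or Carleman-estimate based) inequality of Alessandrini et al., cited in the introduction as \cite{Alessandrini_logStabilityConductivityBoundaryMeasurments_1988}, which, combined with an a priori $H^2(\Omega)$-bound on $v$ (uniform in $q_1,q_2$ in the admissible set), yields the logarithmic modulus $\omega(t)= C_0(\log(1+C_1/t))^{-1}$ on the right-hand side. The main obstacle is propagating the data from $\Sigma$ through the interior of $\Omega$ up to $\Gamma$ while keeping the quantitative dependence explicit; this is exactly where the double-logarithmic behaviour would enter without the a priori $H^2$-bound, and where the uniform smoothness of $\p\Omega$ and the assumed geometry $\Sigma\cap \overline\Gamma = \emptyset$ are essential.

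Finally, the map $g\mapsto q$ so obtained is defined on the (non-convex) subset $\mathcal R \eqdef \{u^q|_\Sigma : q \text{ admissible}\}\subset L^2(\Sigma)$ and is uniformly continuous thereon with modulus $\omega$. Since $L^2(\Sigma)$ is a Hilbert space and $L^2(\Gamma)$ is a Banach space, the extension theorem \cite[Theorem 1.2.12]{BenyaminiLindenstrauss_Book_Vol1_2000} (already used in the proof of \Cref{prop:quantitative_UAT__ON}) yields a continuous extension $\mathcal F : L^2(\Sigma)\to L^2(\Gamma)$ preserving the modulus $\omega$, giving \eqref{ES IP solution map}--\eqref{stability omega}.
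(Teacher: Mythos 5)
Your overall architecture matches the paper's: establish well-posedness and the lower bound $|u^q|\ge c_0/2$ on $\overline\Gamma$ via elliptic regularity, recover $q = -\p_\nu u^q/u^q$ on $\Gamma$, obtain a logarithmic modulus of continuity on the range of the forward map, and then apply the Benyamini--Lindenstrauss extension theorem to pass to all of $L^2(\Sigma)$ (minor quibble: the paper uses that both $L^2(\Sigma)$ and $L^2(\Gamma)$ are \emph{Hilbert} spaces, which is what lets the extension preserve the modulus $\omega$). Where you diverge is at the central quantitative step. The paper does not re-derive the stability estimate; it simply cites \cite[Corollary 2.2]{Cheng-Choulli-Lin}, which directly gives uniqueness and the single-logarithmic stability $\|q_1-q_2\|_{L^2(\Gamma)} \le \omega(\|g_1-g_2\|_{L^2(\Sigma)})$ for precisely this corrosion/Robin problem under the stated a priori bounds, after verifying its hypotheses via \cite[Theorems 6.30--6.31]{GilbardTrudinger_EllipticPDEsBook_1997}.

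You instead propose to prove that estimate from scratch by setting up $v=u_1-u_2$ harmonic with vanishing Neumann data on $\p\Omega\setminus\Gamma$ and Cauchy data $(g_1-g_2,0)$ on $\Sigma$, then propagating via three-spheres/Carleman inequalities. This is conceptually the right programme (and roughly how such results are proved), but there is a genuine gap: you gesture at the hardest part --- obtaining a \emph{single}-logarithmic modulus when propagating Cauchy data from $\Sigma$ all the way to the disjoint boundary portion $\Gamma$, where naive three-spheres chaining degrades to $\log\log$ --- and leave it unresolved ("this is exactly where the double-logarithmic behaviour would enter"). The citation you offer, \cite{Alessandrini_logStabilityConductivityBoundaryMeasurments_1988}, addresses stability for the conductivity (Calder\'on) problem, not the Cauchy problem for harmonic functions or the Robin coefficient problem, so it does not close the gap. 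To make your route rigorous you would need a quantitative unique continuation result for harmonic functions up to the boundary with an explicit single-log modulus (e.g.\ the estimate \cite[Theorem 2.1]{Cheng-Choulli-Lin} used in the paper's Remark A1, or comparable results of Bourgeois, Phung, or Alessandrini--Rondi--Rosset--Vessella), together with the a priori $C^{2,\alpha}$ control on $u^q$ to handle the division by $u_2$ and the trace of $\p_\nu v$. In short: same skeleton as the paper, but the paper's proof is essentially a verification-of-hypotheses plus a citation, while yours attempts to reprove the cited stability result and leaves the decisive quantitative step as a sketch.
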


\begin{proof}
Let $\mathcal Y\subset H^2_0(\Gamma)$ be the set of functions
that satisfy conditions \eqref{ES problem assumptions1}-\eqref{ES problem assumptions3}.
Moreover, let $\mathcal X\subset L^2(\Sigma)$ be the set of the functions $u^q|_\Sigma$,
for which there exists $q$ satisfying \eqref{ES problem assumptions1}-\eqref{ES problem assumptions3}.

Using \cite[Theorem 6.31]{GilbardTrudinger_EllipticPDEsBook_1997} and the remark following it, we see that the solution $u^q\in C^{2,\alpha}(\overline \Omega)$, $0<\alpha<1/2$,
exists and is unique. Moreover, by applying \cite[Theorem 6.30]{GilbardTrudinger_EllipticPDEsBook_1997} to 
$v=u^q-u^{q_0}$ that satisfies
\begin{eqnarray} \label{ES problem v}
 & &\Delta v(x)=0,\quad\ \hbox{in }x\in \Omega,\\
& &\p_\nu v+qv=(q-q_0)u_0,\quad\ \hbox{on }x\in \Gamma,\\
& &\p_\nu v=0,\quad\ \hbox{on }x\in \partial \Omega\setminus \Gamma,
\end{eqnarray} 
we see that $u^q\in C^{2,\alpha}(\overline \Omega)$ 
depends continuously on $q\in C^{1,\alpha}_0(\Gamma).$
By the Sobolev embedding theorem, the identity map $H^2(\p\Omega)\to C^{1,\alpha}(\p\Omega)$ is bounded 
for $0<\alpha<1/2$, and thus, when $\e_0>0$ is small enough, it holds for all $q\in\mathcal Y$ that  
\begin{eqnarray}\label{eq c per 2}
|u^q(x)|>c_0/2>0,\quad\hbox{for all $x\in \overline\Gamma$.}
\end{eqnarray}
Then, the assumptions of \cite[Corollary 2.2]{Cheng-Choulli-Lin} are valid, and this result implies that
there is a unique $q\in \mathcal Y$ such that $g=u^q|_\Sigma$.
Thus, there is a well-defined map
 $\mathcal F_0:\mathcal X\to \mathcal Y$ defined by $\mathcal F_0(u^q|_\Sigma)=q|_\Gamma$.
 Moreover, by \cite[Corollary 2.2]{Cheng-Choulli-Lin}, for $q_1,q_2\in \mathcal Y$ 
 the functions $g_1=u^{q_1}|_\Sigma$ and $g_2=u^{q_2}|_\Sigma$
 satisfy 
 \begin{eqnarray} \label{ES IP estimate 2}
 & &\|\mathcal F_0(g_1)-\mathcal F_0(g_2)\|_{L^2(\Gamma)}\leq \omega(\|g_1-g_2\|_{L^2(\Sigma)}),
 \end{eqnarray}
where $\omega$ is given in \eqref{stability omega}.

Consider the sets $\mathcal X$ and $\mathcal Y$ as subsets of Lebesgue spaces, $\mathcal X\subset L^2(\Sigma)$
and $\mathcal Y\subset L^2(\Gamma)$.
Then, $L^2(\Sigma)$
and $L^2(\Gamma)$ are Hilbert spaces. The Benyamini-Lindenstrauss theorem, \cite[Theorem 1.2.12]{BenyaminiLindenstrauss_Book_Vol1_2000}, implies that
the map $\mathcal F_0:\mathcal X\to \mathcal Y$ admits a continuous extension 
$$
\mathcal F:L^2(\Sigma)\to L^2(\Gamma),
$$
to the whole vector space $L^2( \Sigma)$
that satisfies $\mathcal F|_{\mathcal X}=\mathcal F_0$
and has the same modulus of continuity $\omega$ as $\mathcal F_0$,
that is, the inequality \eqref{ES IP estimate} is valid.
\end{proof}

{\bf Remark A1. } We briefly explain the choice of $\omega(\cdot)$ as the modulus of continuity for $\mathcal F$ in Proposition \ref{Prof Robin IP}. 
Consider the case where $f$ is supported in $\Sigma$
and the boundary $\partial \Omega$ is real analytic. 
The results cited above 
on the solvability of the inverse problem, \cite[Corollary 2.2]{Cheng-Choulli-Lin}, are based on unique continuation for elliptic equations. Moreover, the determination of $q$ from the given data $g^q$ can be done in two steps. 
In the first step, we use the unique continuation theorem for harmonic functions, see \cite[Theorem 2.1]{Cheng-Choulli-Lin}, to determine $u^q$ in a neighborhood $V$ of $\Gamma$.

Recall that $\Delta u^q=0$ and $q\in H^2(\partial \Omega)\subset C^{1,\beta}(\partial \Omega)$ for any $0<\beta<\min(\alpha,1/2)$.
We then apply elliptic regularity. Since $q\ge 0$ is not identically zero, \cite[Theorem 6.31]{GilbardTrudinger_EllipticPDEsBook_1997} and the remark following it imply that the solution $u^q$ is unique and satisfies 
$$
u^q\in C^{2,\beta}(\overline \Omega)\subset H^{2+\beta}(\overline \Omega).
$$
%
%
Next, we view $u^q$ as an element of $H^{2+\beta}(\overline \Omega)$.
By the trace theorem, the boundary values  
\begin{eqnarray}\label{boundary value 1}
& &u^q|_{\partial \Omega}\in 
H^{3/2+\beta}(\partial \Omega) \subset  C^{1,\beta}(\partial\Omega),\\
\label{boundary value 2}
& &\p_\nu u^q|_{\partial \Omega}\in 
H^{1/2+\beta}(\partial  \Omega) \subset  C^{0,\beta}(\partial \Omega),
\end{eqnarray}
depend continuously on $u^q\in H^{2+\beta}(\overline \Omega)$.
The second step is to recover $q|_\Gamma$ using \eqref{eq c per 2},
\eqref{boundary value 1}, \eqref{boundary value 2}, and
the formula
$$
q=-\frac{\partial_\nu u^q|_\Gamma}{u^q|_\Gamma}\in C^{0,\beta}(\overline \Gamma)\subset L^2(\Gamma).
$$
Thus $g^q$ determines $q$ on $\Gamma$ uniquely.
 
As seen above,
the solution map $\mathcal F$ of the inverse problem can be written as the composition
of two maps, $\mathcal F=\mathcal G\circ \mathcal H$, where $\mathcal H:u^q|_{\Sigma}\mapsto u^q|_V$,
with $V\subset \overline \Omega$ a neighborhood of $\Gamma$ in the relative topology of
$\overline \Omega\subset \mathbb R^2$, and $\mathcal G: u^q|_V\mapsto q$. By using \cite[Theorem 2.1]{Cheng-Choulli-Lin},
and \cite[Theorem 1.2.12]{BenyaminiLindenstrauss_Book_Vol1_2000}, we see that the map 
$\mathcal H:\mathcal X\to H^{2+\beta}(V)$ extends to a continuous map 
$\mathcal H:L^2(\Sigma)\to H^{2+\beta}(V)$. The 
general non-stability results for unique continuation, proven by Koch, Ruland, and Salo, see 
 \cite[Theorem 4.3]{Koch}
show that the optimal modulus of continuity for $\mathcal H:L^2(\Sigma)\to H^{2+\beta}(V)$ is logarithmic.
This motivates using a logarithmic modulus of continuity for the map $\mathcal F=\mathcal G\circ \mathcal H$.

\section{Neural operator as (distributed) mixtures of neural operators}\label{ex:Centralized_as_distributed}
\noindent
In this section, we show that a standard neural operator can be identified as a mixture of neural operators.
Suppose that the assumptions in \Cref{def:neural-operator-v2} hold. Let $G \in \mathcal{NO}^{\tanh}_{N,W,L,\Delta,\boldsymbol d}$ so that
\[
G : (H^{s_1}(D_1)^{d_{in}}, \|\cdot\|_{L^2(D_1)^{d_{in}}})
   \to (H^{s_2}(D_2)^{d_{out}}, \|\cdot\|_{L^2(D_2)^{d_{out}}}).
\]
Choose an arbitrary but fixed $f_{1,1} \in H^{s_1}(D_1)^{d_{in}}$ and define the trivial tree
\[
\mathcal{T}_{\mathrm{trivial}} = (\{f_{1,1}\}, \varnothing, f_{1,1}),
\]
which consists solely of the root $f_{1,1}$ and no other nodes. 
The pair $(\mathcal{T}_{\mathrm{trivial}}, \{G\})$ vacuously satisfies the conditions in 
\Cref{s:Prelim__ss:DistributedHypotheses}.

Moreover, 
$$
\operatorname{Activ-Cpl}((\mathcal{T}_{\mathrm{trivial}}, \{G\}))
=
\operatorname{Dist-Cpl}((\mathcal{T}_{\mathrm{trivial}}, \{G\}));
$$
that is, the active complexity coincides with the total complexity because there is only one computational node: the unique leaf, which is also the root. This single expert is responsible for all inputs in $H^{s_1}(D_1)^{d_{in}}$.

The routing depth is trivial: there is no branch decision, only the root/leaf evaluation.
Furthermore, there exist infinitely many such pairs $(\mathcal{T}_{\mathrm{trivial}}, \{G\})$, since the choice of the root $f_{1,1}$ is arbitrary.

However, there is exactly one realization of $(\mathcal{T}_{\mathrm{trivial}}, \{G\})$, and it coincides with the classical centralized neural operator. Henceforth, we identify the classical neural operator with this trivial distributed counterpart.



\end{document}